\newtheorem{proposition}{Proposition}
\newtheorem{lemma}{Lemma}
\newenvironment{proof}[1][Proof]{\begin{trivlist}
  \item[\hskip \labelsep {\bfseries #1}]}{\end{trivlist}}
\newcommand{\qed}{~~~\ensuremath{\blacksquare}}
\def\T{{\cal T}}
\def\R{\mathbb{R}}
\def\E{\mathbb{E}}
\newcommand{\spn}[1]{\mbox{\bf span}\left(#1\right)}
\def\vopt{{\hat v}_{best}}
\icmltitlerunning{TD or BR? The unified oblique projection  view }
\begin{document} 

\twocolumn[
\icmltitle{Should one compute the Temporal Difference fix point or minimize the Bellman Residual~? The unified oblique projection view}

\icmlauthor{Bruno Scherrer}{scherrer@loria.fr}
\icmladdress{LORIA - INRIA Lorraine - Campus Scientifique - BP 239 \\
54506 Vand\oe{}uvre-l{\`e}s-Nancy CEDEX \\
FRANCE}

\icmlkeywords{Stochastic optimal control, Reinforcement Learning,
	Dynamic Programming, Markov Decision Processes}

\vskip 0.3in
]

\begin{abstract} 
  We investigate projection methods, for evaluating a linear
  approximation of the value function of a policy in a Markov Decision
  Process context. We consider two popular approaches, the one-step
  Temporal Difference fix-point computation (TD(0)) and the Bellman
  Residual (BR) minimization.  We describe examples, where each
  method outperforms the other. We highlight a simple relation
  between the objective function they minimize, and show that while BR
  enjoys a performance guarantee, TD(0) does not in general. We then
  propose a unified view in terms of oblique projections of the
  Bellman equation, which substantially simplifies and extends the
  characterization of \citet{schoknecht2002} and the recent analysis
  of \citet{yu}.  Eventually, we describe some simulations that
  suggest that if the TD(0) solution is usually slightly better than
  the BR solution, its inherent numerical instability makes it very
  bad in some cases, and thus worse on average.
\end{abstract}

\section*{Introduction}

We consider linear approximations of the value function
of the policy in the framework of Markov Decision Processes (MDP). We focus on two popular
methods: the {\bf computation of the projected Temporal Difference fixed
point} (TD(0), TD for short), which \citet{antos,fara,sutton09} have recently presented as the
minimization of the mean-square projected Bellman Equation, and the
{\bf minimization of the mean-square Bellman Residual} (BR).
%
In this article, we present some new analytical and empirical data,
that shed some light on both approaches. The paper is organized as
follows. Section \ref{framework} describes the MDP linear
approximation framework and the two projection methods. Section
\ref{examples} presents small MDP examples, where each method
outperforms the other.  Section \ref{stability} highlights a simple
relation between the quantities TD and BR optimize, and show that
while BR enjoys a performance guarantee, TD does not in general.
Section \ref{unified} contains the main contribution of this paper: we
describe a unified view in terms of oblique projections of the Bellman
equation, which simplifies and extends the characterization of
\citet{schoknecht2002} and the recent analysis of \citet{yu}.
Eventually, Section \ref{experiments} presents some simulations, that
address the following practical questions: which of the method gives
the best approximation? and how useful is our analysis
for selecting it a priori?

\section{Framework and Notations}

\label{framework}

\paragraph{The model}
We consider an MDP with a fixed policy, that is an uncontrolled
discrete-time dynamic system with instantaneous rewards. We assume
that there is a {\bf state space} $X$ of finite size $N$. When at
state $i \in \{1,..,N\}$, there is a {\bf transition probability}
$p_{ij}$ of getting to the next state $j$.  Let $i_k$ the state of the
system at time $k$.  At each time step, the system is given a
reward $\gamma^k r(i_k)$ where $r$ is the instantaneous {\bf reward
  function}, and $0<\gamma<1$ is a {\bf discount factor}.  The {\bf
  value} at state $i$ is defined as the total expected return:
$
v(i):=\lim_{N\rightarrow \infty} \E \left[ \left. \sum_{k=0}^{N-1}\gamma^k r(i_k) \right| i_0=i\right].
$
We write $P$ the $N \times N$ stochastic matrix whose elements are
$p_{ij}$.  $v$ can be seen as a vector of $\R^N$.
$v$ is known to be the unique fixed point of the Bellman operator: $\T v
:= r + \gamma P v$, that is $v$ solves the Bellman Equation $v=\T v$ and is equal to $L^{-1}r$ where $L=I-\gamma P$.

\paragraph{Approximation Scheme}
When the size $N$ of the state space is large, one usually comes down
to solving the Bellman Equation approximately. One possibility is to
look for an approximate solution $\hat{v}$ in some specific small
space. The simplest and best understood choice is a linear
parameterization: $\forall $i$,~\hat v(i)=\sum_{j=1}^{m} w_j \phi_j(i)$ where $m \ll N$, the $\phi_j$ are some feature functions that should capture the general shape of $v$, and $w_j$ are the weights that characterize the approximate value $\hat v$. 
For all $i$ and $j$, write $\phi_j$ the $N$-dimensional vector corresponding to the $j^{th}$ feature function and $\phi(i)$ the $m$-dimensional vector giving the features of state $i$. For any vector of matrix $X$, denote $X'$ its transpose. The following $N \times m$ {\bf feature} matrix
$
\Phi = \left( \phi_1 \dots \phi_m \right) = \left( \phi(i_1) \dots \phi(i_N) \right)'
$
leads to write the parameterization of $v$ in a condensed matrix form:
$\hat v=\Phi w$, where $w=(w_1,...,w_m)$ is the $m$-dimensional
{\bf weight} vector. We will now on denote $\spn{\Phi}$ this subspace of $\R^N$ and assume that the vectors $\phi_1,...,\phi_m$ form a linearly independent set.

Some approximation $\hat v$ of $v$ can be obtained by
minimizing $\hat v \mapsto \|\hat v - v\|$ for some norm $\|\cdot\|$, that is
equivalently by projecting $v$ onto $\spn{\Phi}$ orthogonally with respect to $\|\cdot\|$.  In a very general way, any symmetric
positive definite matrix $Q$ of $\R^N$ induces a quadratic
norm $\|\cdot\|_Q$ on $\R^N$ as follows: $\|v\|_Q=\sqrt{v'Qv}$.
It is well known that the orthogonal projection with respect to such a
norm, which we will denote $\Pi_{\|\cdot\|_Q}$, has the following closed form:
$\Pi_{\|\cdot\|_Q}=\Phi \pi_{\|\cdot\|_Q}$ where $\pi_{\|\cdot\|_Q}=(\Phi' Q
\Phi)^{-1} \Phi' Q$ is the linear application from $\R^N$ to $\R^m$
that returns the coordinates of the projection of a point in the basis
$(\phi_1,\dots,\phi_m)$. With these notations, the following relations $\pi_{\|\cdot\|_Q}\Phi=I$ and $\pi_{\|\cdot\|_Q}\Pi_{\|\cdot\|_Q}=\pi_{\|\cdot\|_Q}$ hold.

In an MDP approximation context, where one is modeling a stochastic
system, one usually considers a specific kind of norm/projection.  Let
$\xi=(\xi_i)$ be some distribution on $X$ such that $\xi>0$ (it
assigns a positive probability to all states). Let $\Xi$ be the
diagonal matrix with the elements of $\xi$ on the diagonal. Consider
the orthogonal projection of $\R^N$ onto the feature space
$\spn{\Phi}$ with respect to the $\xi$-weighted quadratic norm
$\|v\|_\xi=\sqrt{\sum_{j=1}^{N}\xi_i {v_i}^2}=\sqrt{v' \Xi v}$.  For
clarity of exposition, we will denote this specific projection $\Pi:=\Pi_{\|\cdot\|_\Xi}=\Phi \pi$ where
$\pi:=\pi_{\|\cdot\|_\Xi}=(\Phi' \Xi \Phi)^{-1} \Phi' \Xi$.

Ideally, one would like to compute the ``best'' approximation
$$
\vopt=\Phi w_{best}\mbox{ with }w_{best} = \pi v = \pi L^{-1}r.
$$
This can be done with algorithms like
TD($1$) / LSTD($1$)\cite{bertsekas,boyan02}, but they require simulating infinitely long
trajectories and usually suffer from a high variance. The
projections methods, which we focus on in this paper, are alternatives
that only consider \emph{one-step} samples.


\paragraph{TD(0) fix point method}
The principle of the TD(0) method (TD for short) is to look for a fixed point of $\Pi \T$, that is, one looks for $\hat{v}_{TD}$ in the space $\spn{\Phi}$ satisfying $\hat{v}_{TD}=\Pi \T \hat{v}_{TD}$. Assuming that the matrix inverse below exists\footnote{This is not necessary the case, as the forthcoming Example 1 (Section \ref{examples}) shows.}, it can be proved\footnote{Section \ref{unified} will generalize this derivation.} that $\hat{v}_{TD}=\Phi w_{TD}$ with
\begin{equation}
\label{td}
w_{TD}=(\Phi'\Xi L \Phi)^{-1}\Phi' \Xi r
\end{equation}
As pointed out by \citet{antos,fara,sutton09}, when the inverse exists, the above computation is equivalent to minimizing for $\hat v \in \spn{\Phi}$ the TD error $E_{TD}(\hat v):=\|\hat v-\Pi \T \hat v\|_{\xi}$ down to 0\footnote{This remark is also true if we replace $\|\cdot\|_\xi$ by any equivalent norm $\|\cdot\|$. This observation lead \citet{sutton09} to propose original off-policy gradient algorithms for computing the TD solution.}.

%

\paragraph{BR minimization method}
The principle of the Bellman Residual (BR) method is to look for $\hat{v} \in \spn{\Phi}$ so that it minimizes the norm of the Bellman Residual, that is the quantity $E_{BR}(\hat v):=\|\hat v-\T \hat v\|_{\xi}$. Since $\hat{v}$ is of the form $\Phi w$, it can be seen that $E_{BR}(\hat v)=\|\Phi w - \gamma P \Phi w - r \|_\xi=\|\Psi w -r\|_\xi$ using the notation $\Psi=L\Phi$. Using standard linear least squares arguments, one can see that the minimum BR is obtained for $\hat{v}_{BR}=\Phi w_{BR}$ with
\begin{equation}
\label{br}
w_{BR}= (\Psi' \Xi \Psi)^{-1}\Psi' \Xi r.
\end{equation}
Note that in this case, the above inverse always exists \citep{schoknecht2002}.

\section{Two simple examples}

\label{examples}

\paragraph{Example 1} Consider the 2 state MDP such that 
$
P=
{\tiny \left(
\begin{array}{cc}
0 &1 \\
0 & 1
\end{array}
\right)}
$.
Denote the rewards $r_1$ and $r_2$. One thus have
$v(1)=r_1+\frac{\gamma r_2}{1-\gamma}$ and $v(2)=\frac{r_2}{1-\gamma}$. Consider the one-feature linear approximation with $\Phi=(1 ~ 2)'$, with uniform distribution $\xi=(.5~ .5)'$.
$\Phi' \Xi \Phi=\frac{5}{2}$, therefore $\pi=\left( \frac{1}{5} \frac{2}{5} \right)$, and the weight of the best approximation is $w_{best}=\pi v=\frac{1}{5}r_1+\frac{2+\gamma}{5(1-\gamma)}r_2$. 
This example has been proposed by \citet{bertsekas} in order to show that fitted Value Iteration can diverge if the samples are not generated by the stationary distribution of the policy. In \cite{bertsekas}, the authors only consider the case $r_1=r_2=0$ so that this diverging result was true even though the exact value function $v(0)=v(1)=0$ did belong to the feature space. In the case $r_1=r_2=0$, the TD and BR methods do calculate the exact solution (we will see later that this is indeed a general fact when the exact value function belongs to the feature space). We thus extend this model by taking $(r_1,r_2)\neq (0,0)$. As a scaling of the reward is translated exactly in the approximation, we consider the general form $(r_1,r_2)=(\cos \theta, \sin \theta)$. 

Consider the TD solution: one has $\Phi' \Xi=\left( \frac{1}{2} ~ 1 \right)$, $(I-\gamma P)\Phi=\left( 1-2\gamma ~ 1-\gamma \right)$, thus $(\Phi'\Xi \Psi)=\frac{5}{2}-3\gamma$ and $\Phi' \Xi r=\frac{r_1}{2}+r_2$. Eventually the weight of the TD approximation is $w_{TD}=\frac{r_1+2r_2}{5-6\gamma}$. One notices here that the value $\gamma=5/6$ is singular.
Now, consider the BR solution. One can see that $(\Psi' \Xi \Psi)^{-1}=\frac{(1-2\gamma)^2+(2-2\gamma)^2}{2}$ and $\Psi' \Xi r=\frac{(1-2\gamma)r_1+(2-2\gamma)r_2}{2}$. Thus, the weight of the BR approximation is $w_{BR}=\frac{(1-2\gamma)r_1+(2-2\gamma)r_2}{(1-2\gamma)^2+(2-2\gamma)^2}$.

For all these approximations, one can compute the squared error $e$ with respect to the optimal solution $v$: For any weight $w \in \{w_{best},w_{TD},w_{BR}\}$, $e(w)=\|v-\Phi w\|_\xi^2=\frac{1}{2}(v(1) - w)^2+\frac{1}{2}(v(2)-2w)^2$. In Figure \ref{ex1}, we plot the squared error ratios $\frac{e(w_{TD})}{e(w_{best})}$ and $\frac{e(w_{BR})}{e(w_{best})}$ on a log scale (they are by definition greater than $1$) with respect to $\theta$ and $\gamma$.
\begin{figure}
\begin{minipage}[l]{.54\linewidth}
\begin{center}
\hspace{-5mm}
\includegraphics[width=4.5cm]{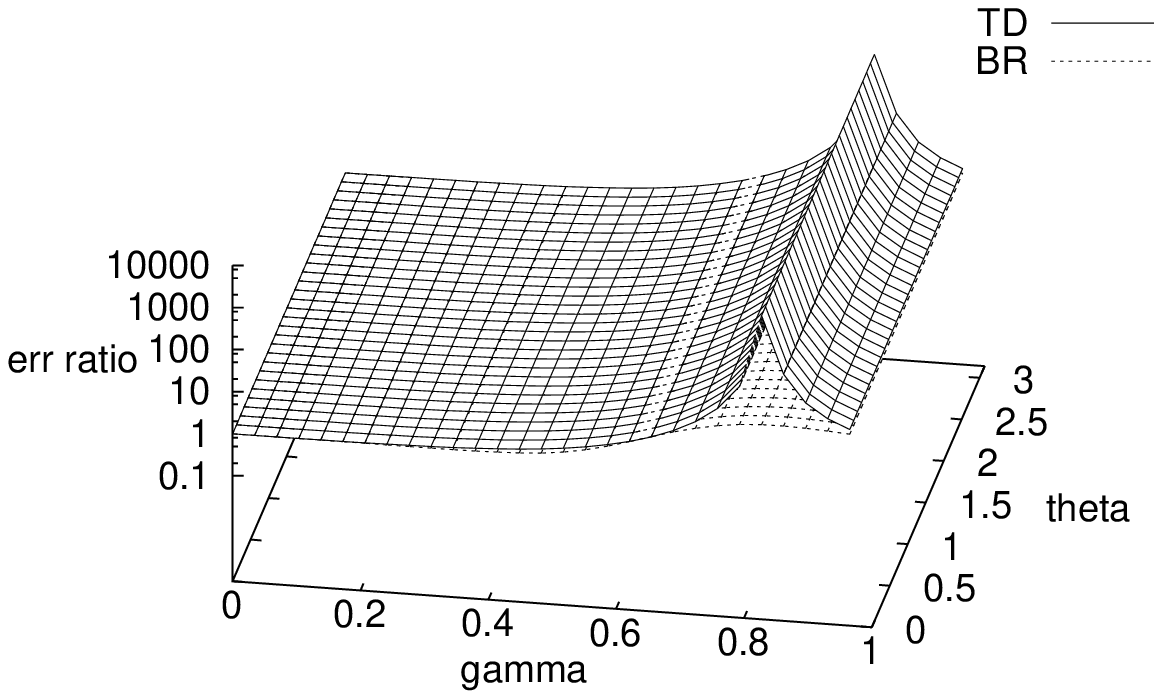}
\end{center}
\end{minipage}
\begin{minipage}[l]{.45\linewidth}
\begin{center}
\includegraphics[width=3.8cm]{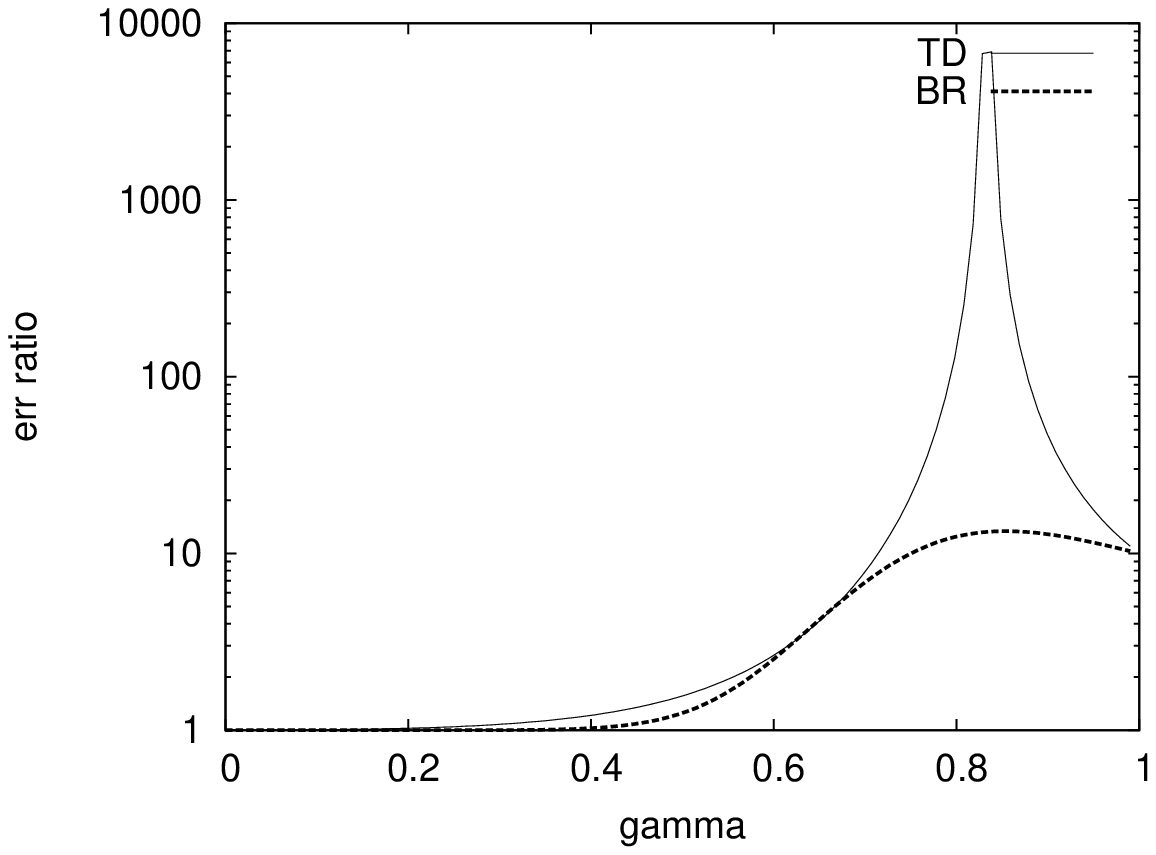}
\end{center}
\end{minipage}
\caption{\label{ex1} Error ratio (in log scale) between the TD/BR projection methods and the best approximation for Example 1, with respect to the discount factor $\gamma$ and the parameter $\theta$ of the reward (Left). It turns out that these surfaces do not depend on $\theta$ so we also draw the graph with respect to $\gamma$ only (Right).
}
\end{figure}
It turns out that these ratios do not depend on $\theta$ (instead of showing this through painful arithmetic manipulations, we will come back to this point and prove it later on). This Figure also displays the graph with respect to $\gamma$ only. We can observe that for any choice of reward function and discount factor, the BR method returns a better value than the TD method. Also, when $\gamma$ is in the neighborhood of $\frac{5}{6}$, the TD error ratio tends to $\infty$ while BR's stays bounded. This Example shows that there exists MDPs where the BR is consistenly better than the TD method, which can give an unbounded error.
One should however not conclude too quickly that BR is \emph{always} better than TD. The literature contains several arguments in favor of TD, one of which is considered in the following Example.

\paragraph{Example 2}

\citet{sutton09} recently described a 3-state MDP example where the TD
method computes the best projection while BR does not. 
The idea behind this 3-state example can be described in a quite general way\footnote{The rest of this section is strongly inspired by a personal communication with Yu.}: Suppose we have a $k+l$-state MDP, of which the Bellman Equation has a block triangular structure:
$
v_1  =  \gamma P_1  v_1 + r_1
~/~
v_2  =  \gamma P_{21} v_1 + P_{22} v_2 + r_2
$
where $v_1 \in \R^k$ and $v_2 \in \R^l$ (the concatenation of the vectors $v_1$ and $v_2$ form the value function). Suppose also that the approximation subspace
$\spn{\Phi}$ is $\R^k \times S_2$ where $S_2$ is a subspace of $\R^l$. For the
first component $v_1$, the approximation space is the entire space $\R^k$.
With TD, we obtain the exact value for the $k$ first components of the value, while with Bellman residual minimization, we do not:
satisfying the first equation exactly is traded for decreasing the error in satisfying the second one (which also involves $v_1$).
In an optimal control context, the
example above can have quite dramatic implications, as $v_1$ can be related to
the costs at some future states accessible from those states associated
with $v_2$, and the future costs are all that matters when making decisions.

Overall, the two methods generate different types of biases, and
distribute error in different manners. In order to gain some more insight, we now turn on to some analytical facts about them.

\section{A Relation and Stability Issues}

\label{stability}

Though several works have compared and considered both methods \cite{schoknecht2002,lagoudakis,munosapi,yu}, the following simple fact has, to our knowledge, never been emphasized \emph{per se}:
\begin{proposition}
\label{reltdbr}
The BR is an upper bound of the TD error, and more precisely:
$$
\forall \hat v \in \spn{\Phi}, E_{BR}(\hat v)^2=E_{TD}(\hat v)^2+\|\T \hat v - \Pi \T \hat v\|_\xi^2.
$$
\end{proposition}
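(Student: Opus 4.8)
The plan is to split the Bellman residual $\hat v - \T \hat v$ into a component lying in $\spn{\Phi}$ and a component that is orthogonal to $\spn{\Phi}$ with respect to the $\Xi$-weighted inner product, and then to conclude by the Pythagorean theorem. Concretely, I would start from the identity
$$\hat v - \T \hat v = (\hat v - \Pi \T \hat v) + (\Pi \T \hat v - \T \hat v),$$
which is a trivial insertion of $\pm \Pi \T \hat v$. The first summand lies in $\spn{\Phi}$, because $\hat v \in \spn{\Phi}$ by the hypothesis of the proposition and $\Pi \T \hat v \in \spn{\Phi}$ by definition of the projection $\Pi = \Phi \pi$. The second summand $\Pi \T \hat v - \T \hat v$ is, up to sign, exactly the projection residual of the vector $\T \hat v$ onto the feature space.

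Next I would establish that these two summands are $\Xi$-orthogonal. The key fact is $\Phi' \Xi (I - \Pi) = 0$, which follows immediately from $\Pi = \Phi \pi$ with $\pi = (\Phi' \Xi \Phi)^{-1} \Phi' \Xi$ (equivalently from the relation $\pi \Pi = \pi$ stated in Section \ref{framework}, left-multiplied by the invertible matrix $\Phi' \Xi \Phi$). Writing the first summand as $\Phi c$ for a suitable coordinate vector $c$, its $\Xi$-inner product with the second summand is $c' \Phi' \Xi (\Pi \T \hat v - \T \hat v) = -c' \Phi' \Xi (I - \Pi) \T \hat v = 0$. Hence the two pieces are orthogonal in $\|\cdot\|_\xi$.

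Finally, applying the Pythagorean identity for the norm $\|\cdot\|_\xi$ to this orthogonal decomposition gives
$$\|\hat v - \T \hat v\|_\xi^2 = \|\hat v - \Pi \T \hat v\|_\xi^2 + \|\Pi \T \hat v - \T \hat v\|_\xi^2.$$
Since $\|\Pi \T \hat v - \T \hat v\|_\xi = \|\T \hat v - \Pi \T \hat v\|_\xi$, the right-hand side is precisely $E_{TD}(\hat v)^2 + \|\T \hat v - \Pi \T \hat v\|_\xi^2$, while the left-hand side is $E_{BR}(\hat v)^2$, which is exactly the claimed identity.

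I do not anticipate a genuine obstacle in this argument, as it reduces to a direct application of the Pythagorean theorem once the right decomposition is chosen. The only points requiring care are to carry the $\Xi$-weighting consistently through the inner product (rather than slipping into the plain Euclidean one) and to use the hypothesis $\hat v \in \spn{\Phi}$ explicitly, since it is exactly what keeps the first summand $\hat v - \Pi \T \hat v$ inside the feature space and thus makes the orthogonality with the projection residual available.
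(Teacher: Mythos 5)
Your proof is correct and follows exactly the paper's argument: decompose $\hat v - \T \hat v$ into $(\hat v - \Pi \T \hat v) + (\Pi \T \hat v - \T \hat v)$, observe that the first piece lies in $\spn{\Phi}$ while the second is $\Xi$-orthogonal to $\spn{\Phi}$, and apply the Pythagorean theorem. You have simply spelled out the orthogonality verification that the paper leaves implicit.
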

\begin{proof}
This simply follows from Pythagore, as $\Pi T \hat v - T \hat v$ is orthogonal to $\spn{\Phi}$ and  $\hat v-\Pi \T \hat v$ belongs to $\spn{\Phi}$. \qed
\end{proof}
This implies that if one can make the BR  small, then the TD Error will also be small. In the limit case where one can make the BR equal to 0, then the TD Error is also 0.

One of the  motivation for minimizing the BR is historically related to a  well-known result of \citet{baird}:
$
\forall \hat{v},~ \|v-\hat{v}\|_\infty \leq \frac{1}{1-\gamma}\|\T \hat v - \hat v\|_\infty.
$
Since one considers the weighted quadratic norm in practice\footnote{Mainly because it is computationnally easier than doing a max-norm minimization, see however \cite{guestrin} for an attempt of doing max-norm projection.}, the related result\footnote{The proof is a consequence of Jensen's inequality and the arguments are very close to the ones in \citep{munosapi}.} that really makes sense here is:
$
\forall \hat{v},~ \|v-\hat{v}\|_\xi \leq \frac{\sqrt{C(\xi)}}{1-\gamma}\|\T \hat{v} - \hat{v}\|_\xi
$
where $C(\xi):=\max_{i,j}\frac{p_{ij}}{\xi_i}$ is  a ``concentration coefficient'', that can be seen as some  measure of  the stochasticity of the MDP\footnote{If $\xi$ is the uniform law, then there always exists such a $C(\xi) \in (1,N)$ where one recalls that $N$ is the size of the state space; in such a case, $C(\xi)$ is minimal if all next-states are chosen with the uniform law, and maximal as soon as there exists a deterministic transition. See \citep{munosapi} for more discussion on this coefficient.}.
This result shows that it is sound to minimize the BR, since it controls (through a constant) the approximation error $\|v-\hat{v}_{BR}\|_\xi $.

On the TD side, there does not exist any similar result. Actually, the
fact that one can build examples (like Example 1) where the TD
projection is numerically unstable implies that one cannot prove
such a result.  Proposition \ref{reltdbr} allows to understand better
the TD method: by minimizing the TD Error, one only minimizes one part
of the BR, or equivalently this means that one does not care about the
term $\|\T v - \Pi \T v\|_\xi^2$, which may be interpreted as a
measure of adequacy of the projection $\Pi$ with the Bellman operator
$\T$. In Example 1, the approximation error of the TD projection goes
to infinity because this adequacy term diverges. In \cite{csepes}, the authors use an algorithm based on the TD Error and make an assumption on this adequacy term (there called the \emph{inherent Bellman error of the approximation space}), so that their algorithm can be proved convergent.

A complementary view on the potential instability of TD, has been
referred to as a \emph{norm incompatibility issue}
\citep{bertsekas,guestrin}, and can be revisited through the notion of
concentration coefficient.  Stochastic matrices $P$ statisfy
$\|P\|_\infty=1$, which makes the Bellman operator $\T$ $
\gamma$-contracting, and thus its fixed point is well-defined. The
orthogonal projection with respect to $\|\cdot\|_\xi$ is such that
$\|\Pi\|_\xi=1$.  Thus $P$ and $\Pi$ are of norm 1, but for different norms.
Unfortunately,  a general (tight) bound for linear
projections is $\|\Pi\|_\infty \leq \frac{1+\sqrt{N}}{2}$
\citep{minkowski} and it can be shown\footnote{One can prove that
  for all $x$, $\|Px\|_\xi^2 \leq \|x\|_{\xi P}^2 \leq
  C(\xi)\|x\|_\xi^2$. The argument for the first inequality involves
  Jensen's inequality and is again close to what is done in
  \citep{munosapi}.}  that $\|P\|_\xi \leq \sqrt{C(\xi)}$ (which can
thus also be of the order of $\sqrt{N}$).  Consequently, $\|\Pi
P\|_\infty$ and $\|\Pi P\|_\xi$ may be greater than 1, and thus the
fixed point of the projected Bellman equation may not be well-defined.
A known exception where the composition $\Pi P$ has norm 1, is
when one can prove that $\|P\|_\xi=1$ (as for instance when $\xi$ is
the stationary distribution of $P$) and in this case we know from \citet{bertsekas,vanroy} that
\begin{equation}
\label{eqvanroy}
\|v-\hat v_{TD}\|_\xi \leq \frac{1}{\sqrt{1-\gamma^2}} \|v-\vopt\|_\xi.
\end{equation}
Another notable such exception is when
$\|\Pi\|_{max}=1$, as in the so-called ``averager'' approximation
\cite{gordon}. However, in general, the stability of TD is
difficult to guarantee.


%

\section{The unified oblique projection view}

\label{unified}

In the TD approach, we consider finding the fixed point of the
composition of an orthogonal projection $\Pi$ and the Bellman operator
$\T$. Suppose now we consider using a (non necessarily orthogonal)
projection $\Pi$ onto $\spn{\phi}$, that is any linear operator that
satisfies $\Pi^2=\Pi$ and whose range is $\spn{\Phi}$. In their most
general form, such operators are called \emph{oblique projections} and
can be written $\Pi_X = \Phi \pi_X$ with $\pi_X=(X' \Phi)^{-1}X'$. The
parameter $X$ specifies the projection direction: precisely, $\Pi_X$
is the projection onto $\spn{\Phi}$ orthogonally to $\spn{X}$. As for
the orthogonal projections, the following relations $\pi_X\Phi=I$ and
$\pi_X \Pi_X=\pi_X$ hold. Recall that $L=I-\gamma P$. We are ready to state the main result of
this paper:
\begin{proposition}
\label{mainprop}
Write $X_{TD}=\Xi \Phi$ and $X_{BR}=\Xi L \Phi$.
(1) The TD fix point computation and the BR minimization are solutions (respectively with $X=X_{TD}$ and $X=X_{BR}$) of the projected equation $\hat v_X=\Pi_X \T \hat v_X$. (2) When it exists, the solution of this projected equation  is the projection of $v$ onto $\spn{\Phi}$ orthogonally to $\spn{L'X}$, i.e. formally $\hat v_X = \Pi_{L'X}~v$.
\end{proposition}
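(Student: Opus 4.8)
The plan is to reduce both parts to a single generic computation. For an arbitrary projection direction $X$, I would first derive the closed form of any solution $\hat v_X = \Phi w$ of the projected equation $\hat v_X = \Pi_X \T \hat v_X$. Writing $\T\hat v_X = r + \gamma P \Phi w$ and applying $\pi_X$ to both sides of $\Phi w = \Phi\pi_X(r+\gamma P\Phi w)$, the relation $\pi_X\Phi = I$ collapses this to $(\pi_X\Phi - \gamma\pi_X P\Phi)w = \pi_X r$, i.e. $\pi_X L\Phi\, w = \pi_X r$ since $\Phi - \gamma P\Phi = L\Phi$. Substituting $\pi_X = (X'\Phi)^{-1}X'$ and multiplying through by $X'\Phi$ cancels the common factor, leaving the single formula $w = (X'L\Phi)^{-1}X'r$, valid whenever $X'L\Phi$ is invertible. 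Everything else specialises this identity.

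For part (1), I would simply substitute the two directions. Taking $X = X_{TD} = \Xi\Phi$ gives $\pi_{X_{TD}} = (\Phi'\Xi\Phi)^{-1}\Phi'\Xi = \pi$, so $\Pi_{X_{TD}} = \Pi$ is exactly the orthogonal projection and the projected equation is literally the TD fixed-point equation $\hat v = \Pi\T\hat v$; the generic formula then reads $w = (\Phi'\Xi L\Phi)^{-1}\Phi'\Xi r$, which is $w_{TD}$ of (\ref{td}). Taking $X = X_{BR} = \Xi L\Phi$ and using $\Psi = L\Phi$, the generic formula becomes $w = (\Psi'\Xi\Psi)^{-1}\Psi'\Xi r$, which is $w_{BR}$ of (\ref{br}). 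This establishes that TD and BR are the solutions associated with $X_{TD}$ and $X_{BR}$.

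For part (2), rather than invert matrices again I would argue via the residual, which also explains why $L'X$ appears. By definition $\Pi_{L'X}v$ is the unique point of $\spn{\Phi}$ such that $v - \Pi_{L'X}v$ is orthogonal to $\spn{L'X}$, i.e. such that $X'L(v - \hat v) = 0$. Since $\hat v_X \in \spn{\Phi}$ already, it suffices to verify this orthogonality for $\hat v = \hat v_X$. Using $Lv = r$ and $L\hat v_X = \hat v_X - \gamma P\hat v_X$, one rewrites $L(v - \hat v_X) = r - L\hat v_X = \T\hat v_X - \hat v_X$, the negative Bellman residual. The projected equation then gives $\T\hat v_X - \hat v_X = \T\hat v_X - \Pi_X\T\hat v_X = (I-\Pi_X)\T\hat v_X$, which lies in $\spn{X}^\perp$ because the range of $I-\Pi_X$ is the kernel of the projection $\Pi_X$, namely $\{y : X'y = 0\}$. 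Hence $X'L(v-\hat v_X) = X'(\T\hat v_X - \hat v_X) = 0$, which is precisely the orthogonality condition defining $\Pi_{L'X}v$, so $\hat v_X = \Pi_{L'X}v$.

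The computations are routine; the only care required is around existence, which I expect to be the one genuine subtlety rather than an obstacle. The generic formula needs $X'L\Phi$ invertible, and this is exactly the matrix $(L'X)'\Phi = X'L\Phi$ whose invertibility is required for $\Pi_{L'X} = \Phi((L'X)'\Phi)^{-1}(L'X)'$ to be a well-defined oblique projection (note $L$ itself is invertible, so $L'X$ keeps the rank of $X$). I would therefore state part (2) under precisely this hypothesis, the ``when it exists'' clause. For $X = X_{BR}$ it reduces to invertibility of $\Psi'\Xi\Psi$, which always holds; for $X = X_{TD}$ it is invertibility of $\Phi'\Xi L\Phi$, the condition already noted for the TD fixed point.
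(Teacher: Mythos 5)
Your proof is correct, and its first half --- deriving the generic closed form $w_X=(X'L\Phi)^{-1}X'r$ by applying $\pi_X$, using $\pi_X\Phi=I$, and cancelling the factor $X'\Phi$, then specialising $X$ to $\Xi\Phi$ and $\Xi L\Phi$ to recover Equations (\ref{td}) and (\ref{br}) --- is essentially the paper's own argument, which reaches the same formula (its Equation (\ref{residual})) via $w_X=(I-\gamma\pi_X P\Phi)^{-1}\pi_X r$. Where you genuinely diverge is part (2): the paper simply substitutes $r=Lv$ into that closed form and reads off $(X'L\Phi)^{-1}X'Lv=\pi_{L'X}\,v$, a one-line algebraic identification, whereas you verify the defining orthogonality property of $\Pi_{L'X}v$ directly, by observing that $L$ carries the approximation error $v-\hat v_X$ to the Bellman residual $\T\hat v_X-\hat v_X=(I-\Pi_X)\T\hat v_X$, which lies in $\ker\Pi_X=\{y:X'y=0\}$, hence $(L'X)'(v-\hat v_X)=0$. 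Both routes are valid and of comparable length; the paper's is more economical given that the closed form is already on the table, while yours explains \emph{why} the direction $L'X$ appears (the projected equation forces the Bellman residual into $\spn{X}^\perp$, and $L$ translates between the two residuals) without a second matrix inversion, which is a genuine gain in geometric insight. Your closing discussion of existence --- that the single invertibility hypothesis on $X'L\Phi=(L'X)'\Phi$ simultaneously governs solvability of the projected equation and well-definedness of $\Pi_{L'X}$, and that it always holds for $X_{BR}$ but not necessarily for $X_{TD}$ --- is also more careful than the paper, which leaves the ``when it exists'' clause implicit.
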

\begin{proof}
We begin by showing part (2). Writing $\hat v_X=\Phi w_X$, the fixed point equation is:
$
\Phi w_X = \Pi_X (r+\gamma P \Phi w_x).
$ 
Multiplying on both sides by $\pi_X$, one obtains:
$
w_X=\pi_X (r+\gamma P \Phi w_x)
$
and therefore
$
w_X=(I-\gamma \pi_X P \Phi)^{-1} \pi_X r.
$
Using the definition of $\pi_X$, one obtains:
\begin{eqnarray}
w_X & = & (I-\gamma (X' \Phi)^{-1}X' P \Phi)^{-1}(X' \Phi)^{-1}X' r \nonumber\\
& = & \left[(X' \Phi)(I-\gamma (X' \Phi)^{-1}X' P \Phi)\right]^{-1}X' r \nonumber\\
& = & (X' (I-\gamma P) \Phi)^{-1}X'r \label{residual}\\
& = & (X' L \Phi)^{-1}X'L v \nonumber\\
& = & \pi_{L'X} ~ v \nonumber
\end{eqnarray}
where we enventually used $r=L v$. 

The proof of part (1) now follows.
The fact that TD is a special case with $X=\Xi \Phi$ is trivial by construction since then $\Pi_X$ is the orthogonal projection with respect to $\|\cdot\|_\xi$. 
When $X=\Xi L \Phi$, one simply needs to observe from Equations \ref{br} and \ref{residual} and the definition of $\Psi=L\Phi$ that $w_X=w_{BR}$. \qed
\end{proof}

Beyond its nice and simple geometric flavour, a direct consequence of Proposition \ref{mainprop} is that it allows to derive tight error bounds for TD, BR, and any other method for general $X$.
For any square matrix $M$, write $\sigma(M)$ its spectral radius.
\begin{proposition}
\label{propbound}
For any choice of $X$, the approximation error satisfies:
\begin{eqnarray}
\|v-\hat v_X\|_\xi & \leq &\|\Pi_{L'X}\|_\xi \|v - \vopt\|_\xi \label{boundproj}\\
& = & \sqrt{\sigma(ABCB')}\|v - \vopt\|_\xi \nonumber
\end{eqnarray}
where $A =  \Phi' \Xi \Phi$, $B =  (X'L \Phi)^{-1} $ and
$C =  X L \Xi^{-1} L' X$
are matrices of size $m \times m$.
\end{proposition}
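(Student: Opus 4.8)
The plan is to split the statement into two independent pieces: the approximation inequality~(\ref{boundproj}), and the spectral identity that rewrites the $N\times N$ operator norm $\|\Pi_{L'X}\|_\xi$ as the square root of a spectral radius of an $m\times m$ matrix. Both rest on Proposition~\ref{mainprop}, which identifies the solution of the projected equation as $\hat v_X=\Pi_{L'X}\,v$, i.e.\ as a genuine projection onto $\spn{\Phi}$.

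For the inequality I would first record that $\Pi_{L'X}$ fixes $\spn{\Phi}$ pointwise, so in particular $\Pi_{L'X}\vopt=\vopt$ since $\vopt\in\spn{\Phi}$. Subtracting this from $\hat v_X=\Pi_{L'X}v$ gives the identity
$$ v-\hat v_X=(I-\Pi_{L'X})v=(I-\Pi_{L'X})(v-\vopt), $$
where the last equality uses $(I-\Pi_{L'X})\vopt=0$. Taking $\|\cdot\|_\xi$ and using submultiplicativity of the induced operator norm yields $\|v-\hat v_X\|_\xi\le\|I-\Pi_{L'X}\|_\xi\,\|v-\vopt\|_\xi$, after which it remains only to replace $\|I-\Pi_{L'X}\|_\xi$ by $\|\Pi_{L'X}\|_\xi$.

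For the spectral identity I would use the standard characterization of the operator norm induced by $\|\cdot\|_\xi$: for any matrix $M$,
$$ \|M\|_\xi^2=\max_{u\ne 0}\frac{u'M'\Xi M u}{u'\Xi u}=\sigma\big(\Xi^{-1}M'\Xi M\big)=\sigma\big(M\Xi^{-1}M'\Xi\big), $$
the first spectral-radius equality holding because the generalized Rayleigh quotient is maximized at the top generalized eigenvalue (and $\Xi^{-1}M'\Xi M$ is similar to the symmetric positive semidefinite matrix $\Xi^{-1/2}M'\Xi M\Xi^{-1/2}$), and the second by cyclic invariance. Substituting $M=\Pi_{L'X}=\Phi\,(X'L\Phi)^{-1}X'L=\Phi B X'L$ and expanding gives $M\Xi^{-1}M'\Xi=\Phi B\,(X'L\Xi^{-1}L'X)\,B'\Phi'\Xi=\Phi B C B'\Phi'\Xi$, with $C=X'L\Xi^{-1}L'X$. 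I would then invoke the cyclic invariance $\sigma(PQ)=\sigma(QP)$ to move the leftmost $\Phi$ to the right, collapsing the $N\times N$ product to the $m\times m$ matrix $BCB'\Phi'\Xi\Phi=BCB'A$ with $A=\Phi'\Xi\Phi$, and once more to reach $\sigma(ABCB')$, giving $\|\Pi_{L'X}\|_\xi=\sqrt{\sigma(ABCB')}$.

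The main obstacle is the step $\|I-\Pi_{L'X}\|_\xi=\|\Pi_{L'X}\|_\xi$: this is not an algebraic triviality but the classical fact that a nonzero, non-identity idempotent on a Hilbert space shares the norm of its complementary idempotent. Here the relevant structure is the inner product $\langle u,w\rangle_\xi=u'\Xi w$, and $\Pi_{L'X}$ qualifies because its range $\spn{\Phi}$ has dimension $m$ with $0<m<N$; I would cite this result or prove it by the short reduction to the two-dimensional invariant subspaces on which a projection acts nontrivially. A secondary point needing care is the bookkeeping in the spectral computation—writing $C=X'L\Xi^{-1}L'X$ so that $A$, $B$, $C$ are all genuinely $m\times m$—and checking that $\sigma(PQ)=\sigma(QP)$ is applied to factors whose product has rank at most $m$, so that no nonzero eigenvalue (hence the spectral radius) is lost in passing from the $N\times N$ to the $m\times m$ expression.
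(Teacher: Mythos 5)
Your proof is correct and follows essentially the same route as the paper's: the same factorization $v-\hat v_X=(I-\Pi_{L'X})(v-\vopt)$ (the paper writes it as $(I-\Pi_{L'X})(I-\Pi_{\Xi\Phi})v$, which is the same thing) combined with the classical identity $\|I-P\|_\xi=\|P\|_\xi$ for nontrivial idempotents (the paper cites Szyld for this), and the same reduction of $\|\Pi_{L'X}\|_\xi^2$ to $\sigma(ABCB')$ --- the only difference being that you re-derive the norm-to-spectral-radius conversion (the paper's Lemma~\ref{lemmayu}, quoted from \citet{yu}) inline via the generalized Rayleigh quotient and cyclic invariance of the spectral radius. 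You also correctly read $C$ as $X'L\Xi^{-1}L'X$, which is the intended $m\times m$ matrix.
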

Thus, for any $X$, the amplification of the smallest error $\|v - \vopt\|_\xi$ depends on the norm of the associated oblique projection, which can be estimated as the spectral radius of the product of small matrices.
A simple corollary of this Proposition is the following: if the real value $v$
belongs to the feature space $\spn{\Phi}$ (in such a case $v=\vopt$)
then all oblique projection methods find it ($\hat v_X=v$).
\begin{proof}[Proof of Proposition \ref{propbound}~]
Proposition \ref{mainprop} implies that
$
v-\hat v_X = (I-\Pi_{L' X}) v 
 =  (I-\Pi_{L' X}) (I-\Pi_{\Xi \Phi}) v.
$
where we used the fact that $\Pi_{L'X}\Pi_{\Xi\Phi}=\Pi_{\Xi\Phi}$ since $\Pi_{L'X}$ and $\Pi_{\Xi\Phi}$ are projections onto $\spn{\Phi}$. Taking the norm, one obtains
$
\|v-\hat v_X\|_\xi  \leq  \|I-\Pi_{L' X}\|_\xi  \|v - \Pi_{\Xi\Phi}v \|_\xi 
 = \|\Pi_{L'X}\|_\xi  \|v - \vopt\|_\xi
$
where we used the definition of $\vopt$, and the fact that $\|I-\Pi_{L'X}\|_\xi=\|\Pi_{L'X}\|_\xi$ since $\Pi_{L'X}$ is a (non-trivial) projection (see e.g. \cite{Szyld}). Thus Equation \ref{boundproj} holds.

In order to evaluate the norm in terms of small size matrices, one will use the following Lemma on the projection matrix $\Pi_{L'X}=\Phi \pi_{L'X}$:
\begin{lemma}[\citet{yu}]
\label{lemmayu}
Let $Y$ be an $N \times m$ matrix, and $Z$ a $m \times N$ matrix, then
$
\|YZ\|_\xi^2 = \sigma\left((Y' \Xi Y)(Z \Xi^{-1} Z') \right).
$ 
\end{lemma}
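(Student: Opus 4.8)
The plan is to recognize that $\|YZ\|_\xi$ is the operator norm induced by the vector norm $\|\cdot\|_\xi$, and then to reduce the resulting $N \times N$ eigenvalue problem to the advertised $m \times m$ one by a similarity transform followed by the cyclic property of the spectrum. First I would write the induced norm as a generalized Rayleigh quotient:
$$
\|YZ\|_\xi^2 = \max_{x\neq 0}\frac{\|YZx\|_\xi^2}{\|x\|_\xi^2} = \max_{x\neq 0}\frac{x' Z'Y'\Xi Y Z x}{x'\Xi x}.
$$
Since $\xi>0$, the diagonal matrix $\Xi$ admits a positive-definite square root $\Xi^{1/2}$, and the substitution $u=\Xi^{1/2}x$ turns this into an ordinary Rayleigh quotient for the symmetric positive semidefinite matrix $M:=\Xi^{-1/2}Z'Y'\Xi Y Z\,\Xi^{-1/2}$. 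Hence $\|YZ\|_\xi^2$ equals the largest eigenvalue of $M$, and because $M$ is symmetric positive semidefinite, that largest eigenvalue coincides with its spectral radius, so $\|YZ\|_\xi^2=\sigma(M)$.

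The second step is to shrink the $N\times N$ matrix $M$ to an $m\times m$ matrix using the standard fact that $UV$ and $VU$ share the same nonzero eigenvalues, and hence the same spectral radius. Setting $W:=Z\Xi^{-1/2}$ (an $m\times N$ matrix) and $A:=Y'\Xi Y$ (an $m\times m$ matrix), one has $M=W'AW=W'(AW)$, so
$$
\sigma(M)=\sigma\big(W'(AW)\big)=\sigma\big((AW)W'\big)=\sigma(A\,WW').
$$
Observing finally that $WW'=Z\Xi^{-1/2}\Xi^{-1/2}Z'=Z\Xi^{-1}Z'$ yields $\sigma(M)=\sigma\big((Y'\Xi Y)(Z\Xi^{-1}Z')\big)$, which is exactly the claimed identity.

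The only genuine obstacle is the passage from dimension $N$ to dimension $m$: it rests on the cyclic property of the spectrum for rectangular factors, and one must check that it preserves the spectral radius (not merely matches individual eigenvalues), and that an equality rather than an inequality survives — which it does precisely because $M$ is symmetric positive semidefinite, so its operator norm equals its top eigenvalue. Everything else — the change of variables and the identification $WW'=Z\Xi^{-1}Z'$ — is routine bookkeeping with the diagonal matrix $\Xi$.
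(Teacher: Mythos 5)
Your proof is correct. The paper itself states this lemma without proof, simply citing \citet{yu}, and your argument is the standard one used there: rewrite the induced $\|\cdot\|_\xi$-norm as a Rayleigh quotient, symmetrize via the change of variables $u=\Xi^{1/2}x$ so that the squared norm becomes the top eigenvalue (hence the spectral radius) of the symmetric positive semidefinite matrix $M=\Xi^{-1/2}Z'Y'\Xi YZ\,\Xi^{-1/2}$, and then pass from the $N\times N$ matrix to the $m\times m$ matrix $(Y'\Xi Y)(Z\Xi^{-1}Z')$ using the fact that $UV$ and $VU$ have the same nonzero spectrum. The one point you flag as needing care --- that the cyclic swap preserves the spectral radius and not just the nonzero eigenvalues --- is indeed harmless, since the only eigenvalues that can differ between $UV$ and $VU$ are zeros, which never affect the spectral radius.
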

Thus,
$
\|\Pi_{L'X}\|^2_\xi  =  \|\Phi \pi_{L'X}\|^2_\xi 
 =  \sigma[ (\Phi' \Xi \Phi)  (\pi_{L'X} \Xi^{-1} (\pi_{L'X})') ] 
 =  \sigma[ \Phi' \Xi \Phi (X' L \Phi)^{-1} X' L \Xi^{-1} L' X (\Phi' L' X)^{-1} ] 
 =  \sigma[ A B C B' ]. \qed
$
\end{proof}

Proposition \ref{mainprop} is closely related to the work of 
\cite{schoknecht2002}, in which the author derived the following characterization of the TD and BR solutions:
\begin{proposition}[\citet{schoknecht2002}]
The TD fix point computation and the BR minimization are orthogonal projections of the value $v$ respectively induced by the seminorm $\|\cdot\|_{Q_{TD}}$\footnote{This is a seminorm because the matrix $Q_{TD}$ is only semidefinite (since $\Phi \Phi'$ has rank smaller than $m<N$). The corresponding projection can still be well defined (i.e. each point has exactly one projection) provided that $\spn{\Phi} \cap \{ x; \|x\|_{Q_{TD}}=0 \} =\{0\}$.} with $Q_{TD}=L' \Xi \Phi \Phi'\Xi L$ and by the norm $\|\cdot\|_{Q_{BR}}$ with $Q_{BR}=L'\Xi L$.
\end{proposition}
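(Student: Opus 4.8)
The plan is to reduce both claims to a single algebraic identity. Since $v$ is the exact solution of the Bellman equation, we have $r=Lv$, and hence for any $\hat v=\Phi w$ the Bellman residual rewrites as $\hat v-\T\hat v = L\Phi w-r = L(\Phi w-v)$. This is the key observation: it expresses the residual, and therefore both error functionals, directly in terms of the displacement $\Phi w-v$ that governs any projection of $v$ onto $\spn{\Phi}$.

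For the BR solution I would substitute this identity into the definition of $E_{BR}$. One immediately gets $E_{BR}(\hat v)^2=\|L(\Phi w-v)\|_\xi^2=(\Phi w-v)'L'\Xi L(\Phi w-v)=\|\hat v-v\|_{Q_{BR}}^2$ with $Q_{BR}=L'\Xi L$. Since $L$ is invertible and $\Xi$ is positive definite, $Q_{BR}$ is positive definite, so $\|\cdot\|_{Q_{BR}}$ is a genuine norm; minimizing $E_{BR}$ is then by definition computing the orthogonal projection of $v$ onto $\spn{\Phi}$ with respect to this norm, which settles the BR part.

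For the TD solution I would characterize the (semi)orthogonal projection by its normal equations: $\hat v=\Phi w$ is the $Q_{TD}$-orthogonal projection of $v$ iff $\Phi'Q_{TD}(\Phi w-v)=0$. Writing $M=\Phi'\Xi L\Phi$ and substituting $Q_{TD}=L'\Xi\Phi\Phi'\Xi L$ together with $r=Lv$, these equations collapse to $M'(Mw-\Phi'\Xi r)=0$. The only real subtlety is the seminorm status of $Q_{TD}$: the matrix $\Phi'Q_{TD}\Phi=M'M$ is positive definite exactly when $M$ is invertible, which is the assumed condition of existence of the TD fixed point and is precisely the well-definedness condition $\spn{\Phi}\cap\{x;\|x\|_{Q_{TD}}=0\}=\{0\}$ quoted in the footnote. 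Under this invertibility, $M'(Mw-\Phi'\Xi r)=0$ forces $Mw=\Phi'\Xi r$, i.e. $w=(\Phi'\Xi L\Phi)^{-1}\Phi'\Xi r=w_{TD}$, completing the proof.

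I expect this handling of the semidefinite case --- verifying that the $Q_{TD}$-projection exists, is unique, and coincides with the TD fixed point despite $Q_{TD}$ being only positive semidefinite --- to be the main obstacle. Everything else is a direct substitution of the identity $\hat v-\T\hat v=L(\Phi w-v)$ followed by factoring out the matrix $M$.
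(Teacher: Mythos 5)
Your proof is correct, and it is genuinely different from the route the paper takes. The paper does not prove this proposition from scratch: it cites \citet{schoknecht2002} and then establishes the statement by showing it is \emph{equivalent} to its own oblique-projection characterization (Proposition \ref{mainprop}), namely that $\hat v_X=\Pi_{L'X}v$; the BR case follows because $L'X_{BR}=Q_{BR}\Phi$ makes $\Pi_{L'X_{BR}}$ literally the $Q_{BR}$-orthogonal projection, and the TD case is handled by the matrix identity $\Pi_Y=\Phi(Y'\Phi)^{-1}Y'=\Phi(\Phi'YY'\Phi)^{-1}\Phi'YY'$ with $Y=L'X_{TD}$. You instead work directly from the definitions: the identity $\hat v-\T\hat v=L(\Phi w-v)$ turns $E_{BR}^2$ into $\|\hat v-v\|_{Q_{BR}}^2$ on the nose, and the normal equations $\Phi'Q_{TD}(\Phi w-v)=0$ collapse to $M'(Mw-\Phi'\Xi r)=0$ with $M=\Phi'\Xi L\Phi$, recovering Equation \ref{td} when $M$ is invertible. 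Your treatment of the seminorm issue is also right and slightly more explicit than the paper's footnote: $\Phi'Q_{TD}\Phi=M'M$ is invertible exactly when $M$ is, which is simultaneously the existence condition for the TD fixed point and the well-definedness condition for the $Q_{TD}$-projection. What your approach buys is self-containedness and elementarity (no appeal to the oblique-projection machinery); what the paper's approach buys is the explicit bridge between the orthogonal-projection view of \citet{schoknecht2002} and the unified oblique-projection view, which is the conceptual point the paper is making in that passage.
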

This ``orthogonal projection'' characterization and our ``oblique projection'' characterization are in fact equivalent. On the one hand for BR, it is immediate to notice that $\Pi_{\|\cdot\|_{Q_{BR}}}=\Pi_{L'X_{BR}}$. On the other hand for TD, writing $Y=L'X_{TD}$, one simply needs to notice that $\Pi_{L' X_{TD}}=\Pi_Y=\Phi(Y' \Phi)^{-1}Y'=\Phi(Y' \Phi)^{-1}(\Phi' Y)^{-1}(\Phi' Y)Y'=\Phi(\Phi' Y Y' \Phi)^{-1}\Phi' YY'=\Pi_{\|\cdot\|_{Q_{TD}}}$.
The work of \citet{schoknecht2002} suggests that TD and BR are optimal for different criteria, since both look for some $\hat v \in \spn{\Phi}$ that minimizes $\|\hat v - v\|$ for some (semi)norm $\|\cdot\|$. Curiously, our result suggests that neither is optimal, since neither  uses the best projection direction $X^*:=L'^{-1} \Xi  \Phi$ for which $\hat v_{X^*}=\Pi_{L'X^*}v=\Pi_{\Xi \Phi}v=\vopt$ and this supports the empirical evidence that there is no clear ``winner'' between TD and BR. 

Our main results, stated in Propositions \ref{mainprop} and
\ref{propbound}, constitutes a revisit of the work of \citet{yu},
where the authors similarly derived error bounds for TD and BR. Our
approach mimicks theirs: 1) we derive a linear relation between the
projection $\hat v$, the real value $v$ and the best projection
$\vopt$, then 2) analyze the norm of the matrices involved in this
relation in terms of spectral radius of small matrices (through Lemma
\ref{lemmayu}, which is taken from \cite{yu}). From a purely
quantitative point of view, our bounds are identical to the ones
derived there. Two immediate consequences of this quantitative
equivalence are that, as in \cite{yu}, (1) our bound is tight in the
sense that there exists a worst choice for the reward for which it
holds with equality, and (2) it is always better than that of Equation
\ref{eqvanroy} from \citet{bertsekas,vanroy}. However, our work is
qualitatively different: by highlighting the oblique projection
relation between $\hat v$ and $v$, not only do we provide a clear
geometric intuition for both methods, but we also greatly
simplify the form of the results and their proofs (see \cite{yu} for details).

Last but not least, there is globally a significant difference between
our work and the two works we have just mentionned.  The analysis we
propose is unified for TD and BR (and even extends to potential new
methods through other choices of the parameter $X$), while the results
in \cite{schoknecht2002} and \cite{yu} are proved independently for
each method. We hope that our unified approach will help understanding
better the pros and cons of TD, BR, and related alternative
approaches.

\begin{figure}[h]
\begin{minipage}[l]{.49\linewidth}
\begin{center}
{\footnotesize$\gamma=0.9$}
\vspace{-5mm}
\hspace{-5mm}
\includegraphics[width=4.5cm]{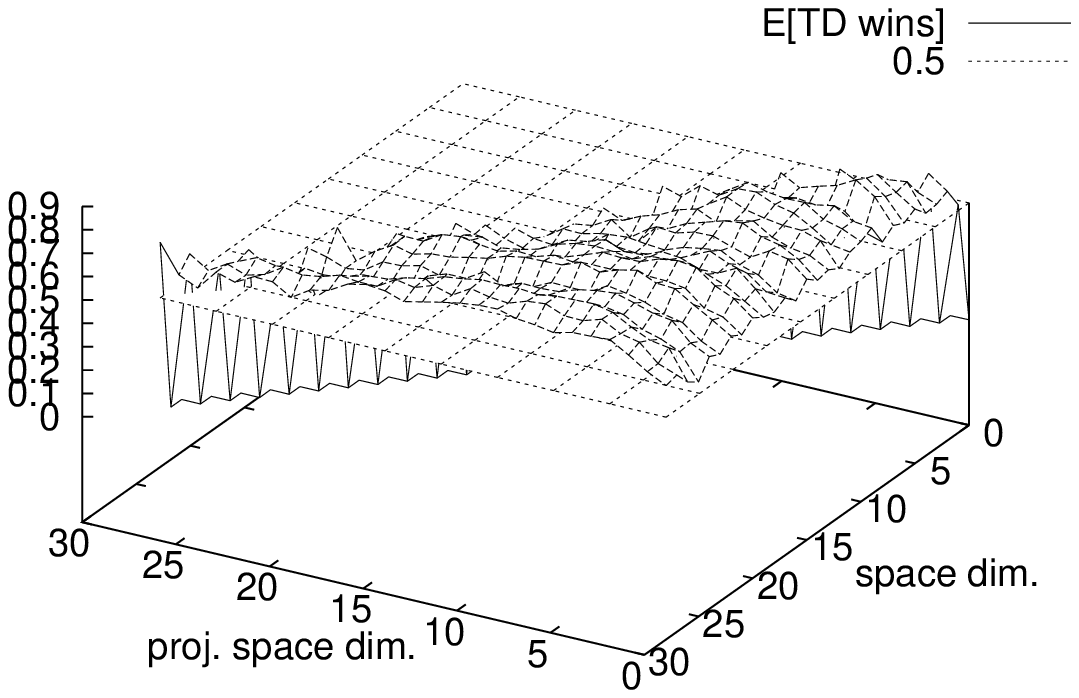}
\end{center}
\end{minipage}
\begin{minipage}[l]{.49\linewidth}
\begin{center}
{\footnotesize$\gamma=0.95$}
\vspace{-5mm}
\hspace{-5mm}
\includegraphics[width=4.5cm]{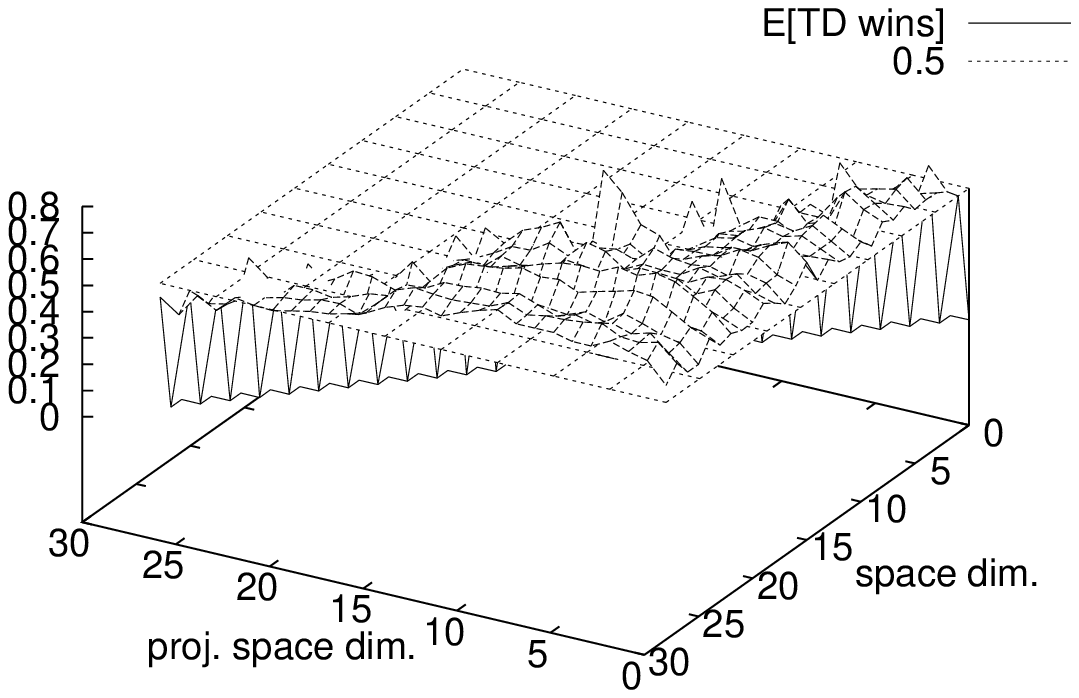}
\end{center}
\end{minipage}

\begin{minipage}[l]{.49\linewidth}
\begin{center}
{\footnotesize$\gamma=0.99$}
\vspace{-5mm}
\hspace{-5mm}
\includegraphics[width=4.5cm]{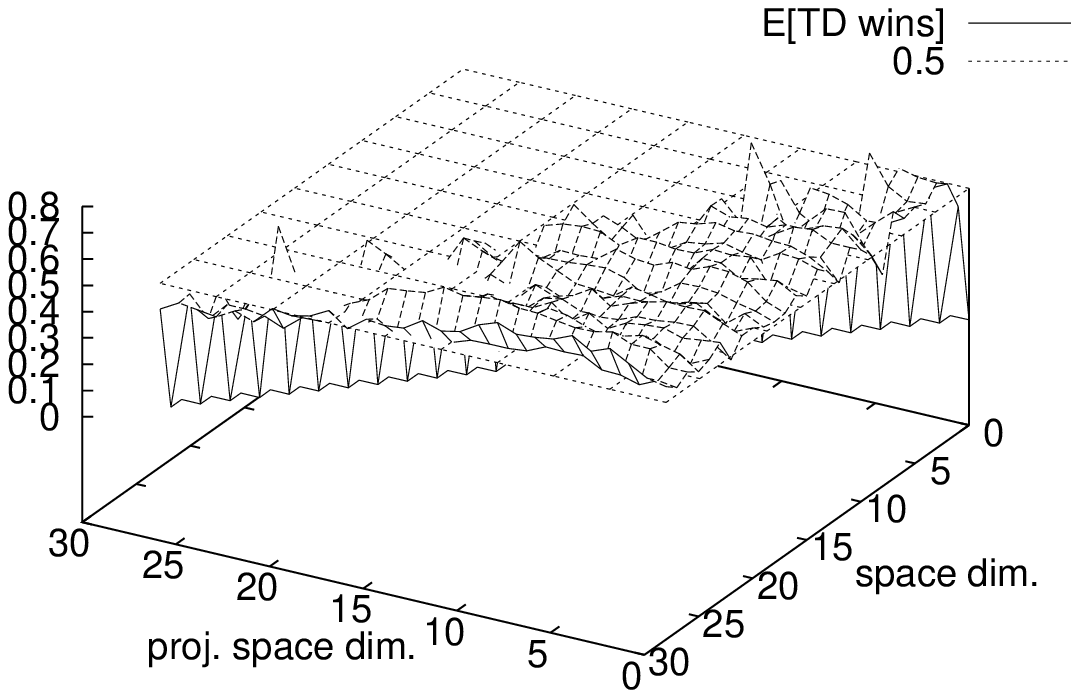}
\end{center}
\end{minipage}
\begin{minipage}[l]{.49\linewidth}
\begin{center}
{\footnotesize$\gamma=0.999$}
\vspace{-5mm}
\hspace{-5mm}
\includegraphics[width=4.5cm]{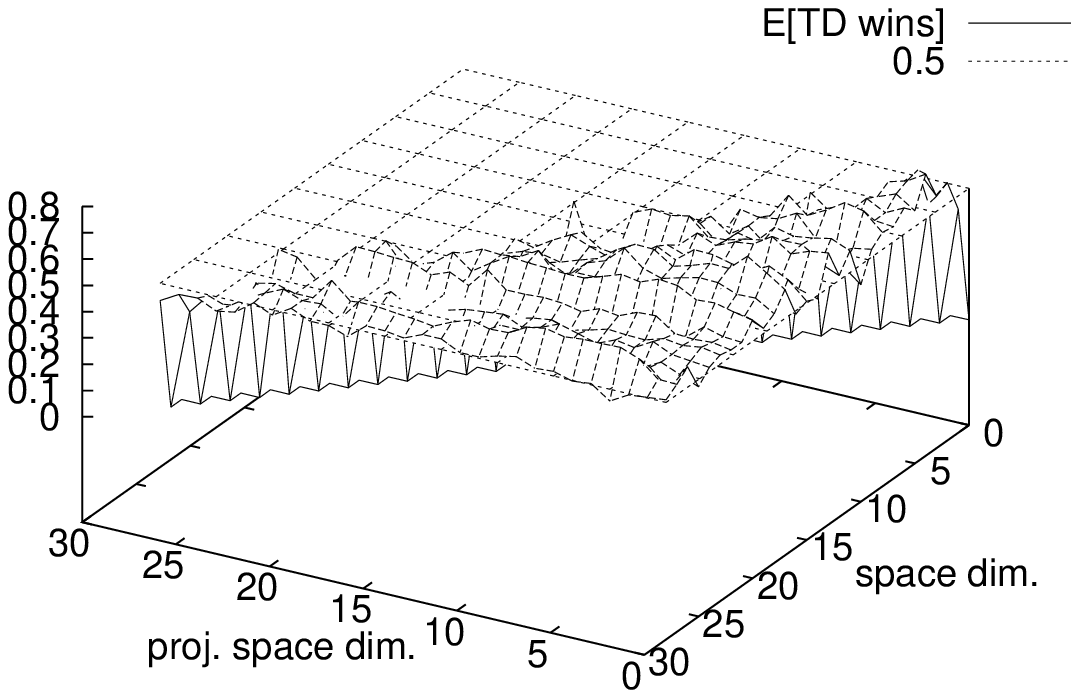}
\end{center}
\end{minipage}
\caption{\label{tdwin}TD win ratio.}
\end{figure}

\begin{figure}[h]
\begin{minipage}[l]{.49\linewidth}
\begin{center}
{\footnotesize$\gamma=0.9$}
\vspace{-5mm}
\hspace{-5mm}
\includegraphics[width=4.5cm]{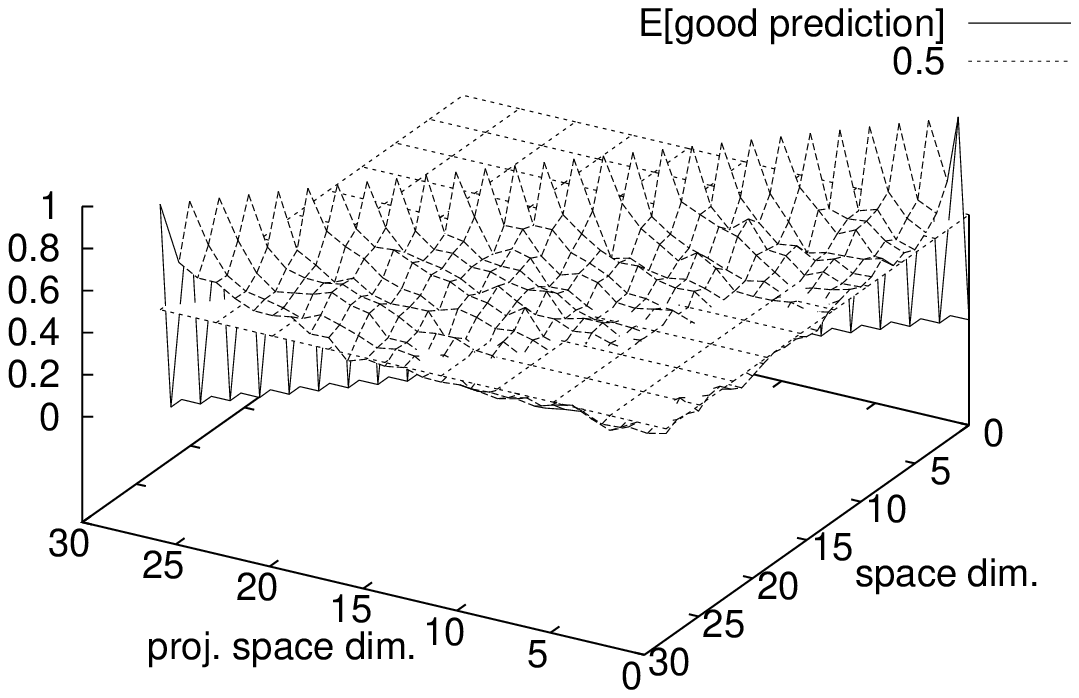}
\end{center}
\end{minipage}
\begin{minipage}[l]{.49\linewidth}
\begin{center}
{\footnotesize$\gamma=0.95$}
\vspace{-5mm}
\hspace{-5mm}
\includegraphics[width=4.5cm]{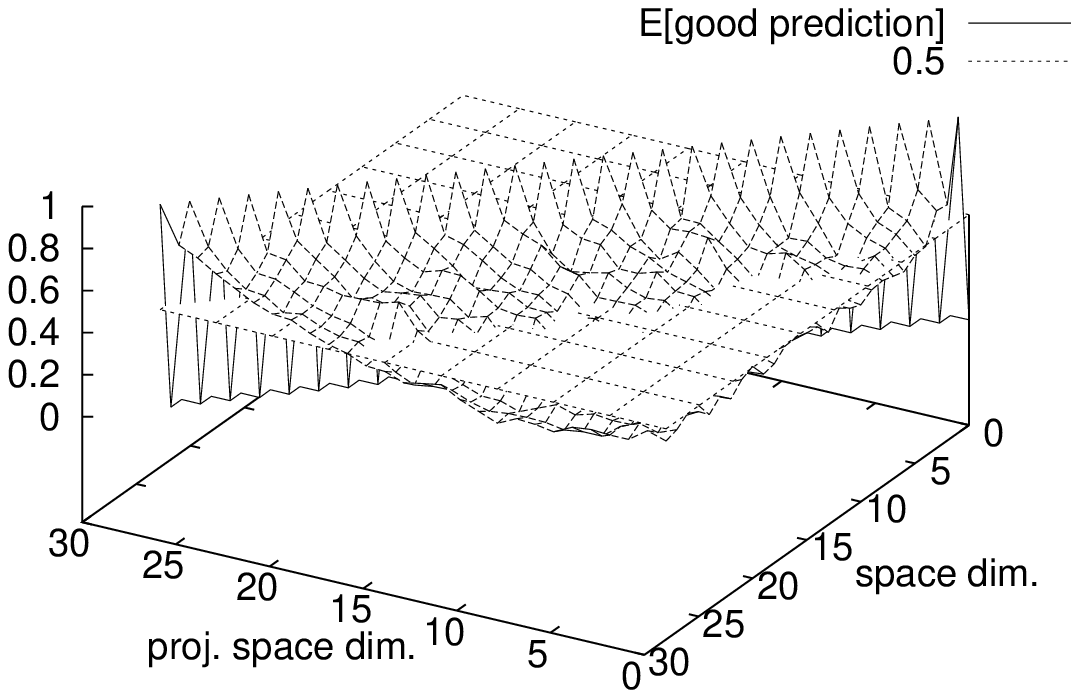}
\end{center}
\end{minipage}

\begin{minipage}[l]{.49\linewidth}
\begin{center}
{\footnotesize$\gamma=0.99$}
\vspace{-5mm}
\hspace{-5mm}
\includegraphics[width=4.5cm]{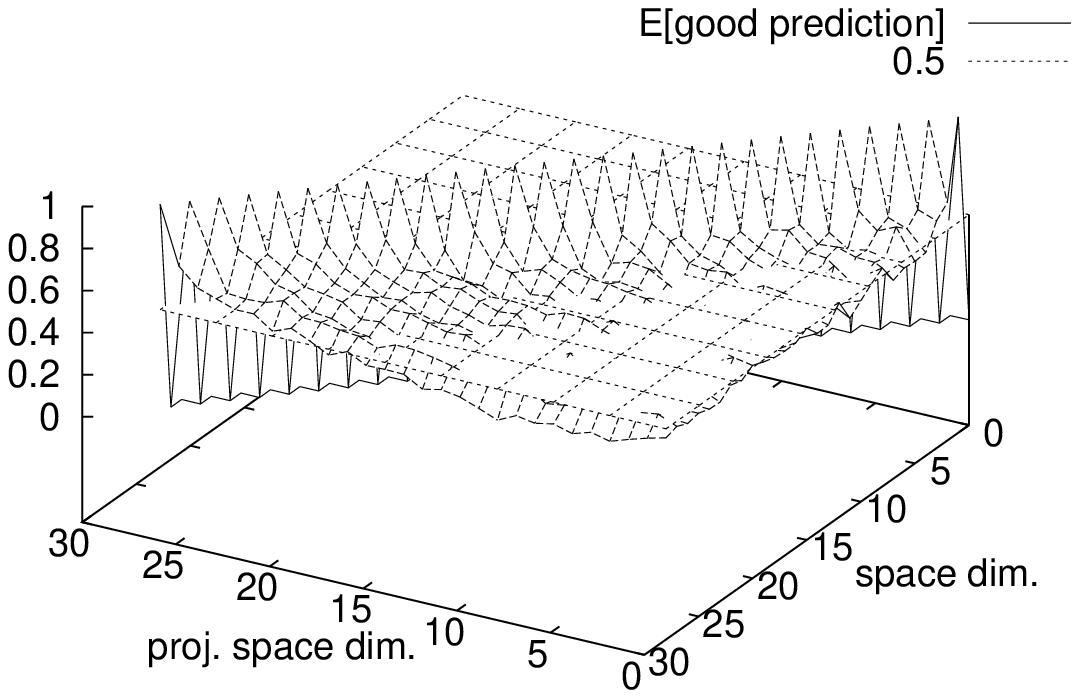}
\end{center}
\end{minipage}
\begin{minipage}[l]{.49\linewidth}
\begin{center}
{\footnotesize$\gamma=0.999$}
\vspace{-5mm}
\hspace{-5mm}
\includegraphics[width=4.5cm]{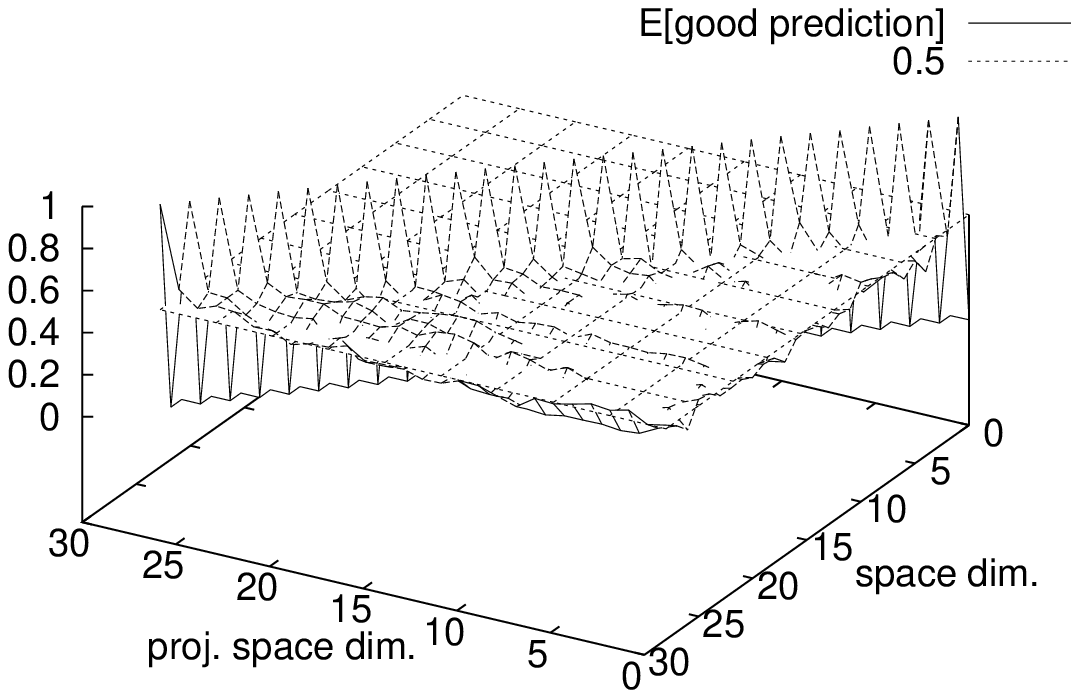}
\end{center}
\end{minipage}
\caption{\label{goodprediction}Prediction of the best method through Prop. \ref{propbound}}
\end{figure}

\begin{figure}[h]

\begin{minipage}[l]{.49\linewidth}
\begin{center}
{\footnotesize$\gamma=0.9$}
\vspace{-5mm}
\hspace{-5mm}
\includegraphics[width=4.5cm]{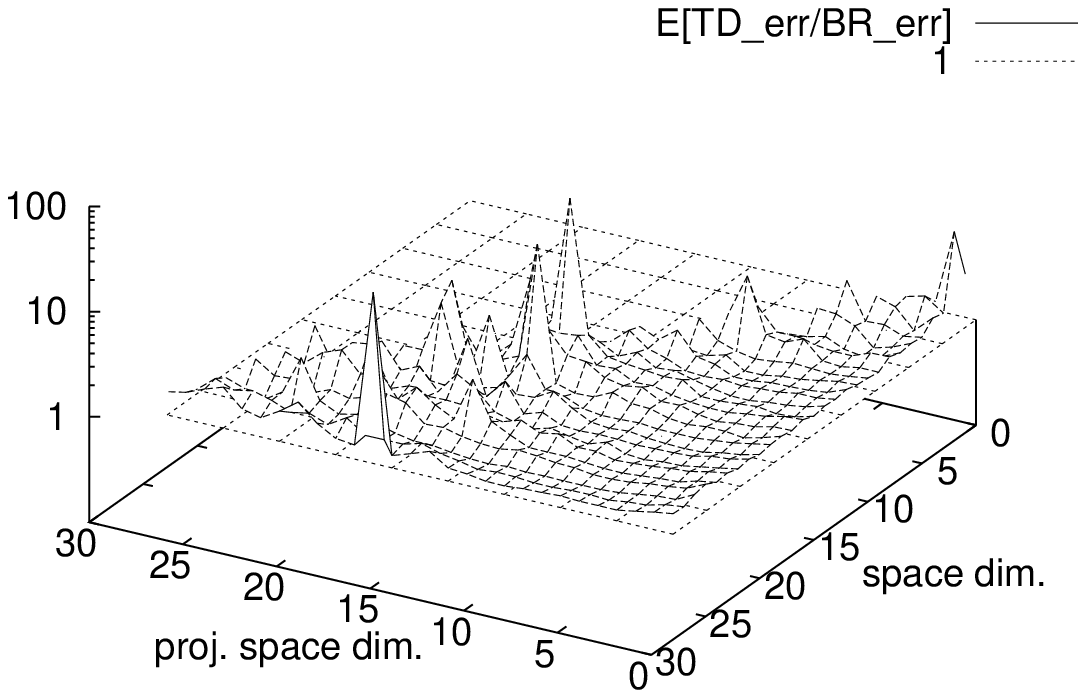}
\end{center}
\end{minipage}
\begin{minipage}[l]{.49\linewidth}
\begin{center}
{\footnotesize$\gamma=0.95$}
\vspace{-5mm}
\hspace{-5mm}
\includegraphics[width=4.5cm]{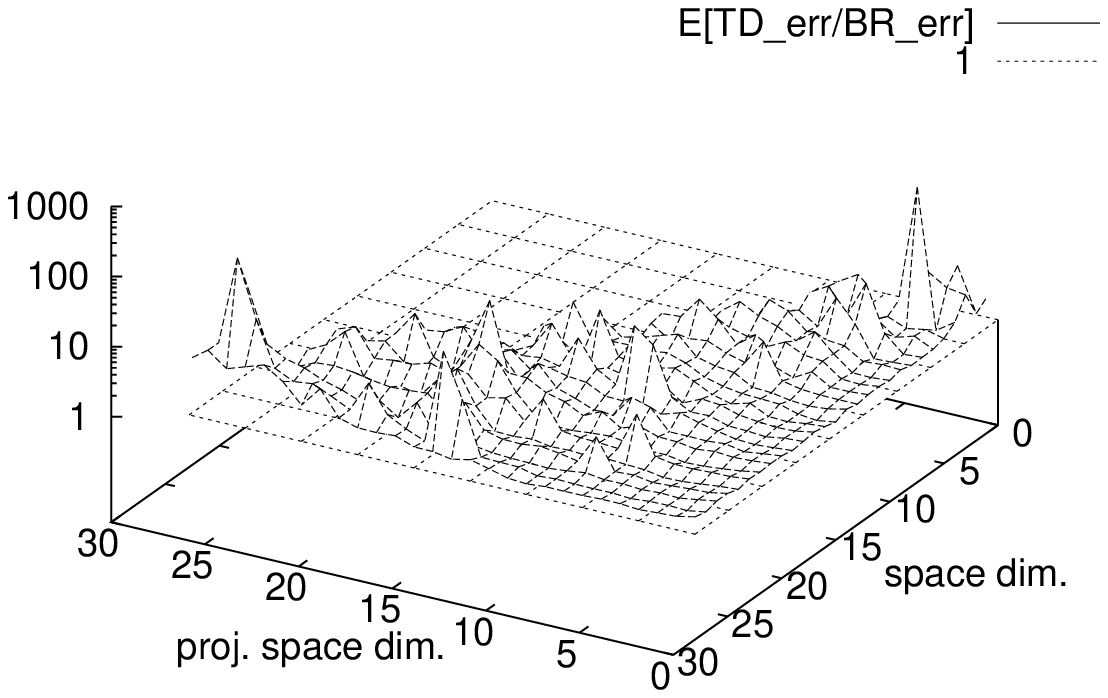}
\end{center}
\end{minipage}

\begin{minipage}[l]{.49\linewidth}
\begin{center}
{\footnotesize$\gamma=0.99$}
\vspace{-5mm}
\hspace{-5mm}
\includegraphics[width=4.5cm]{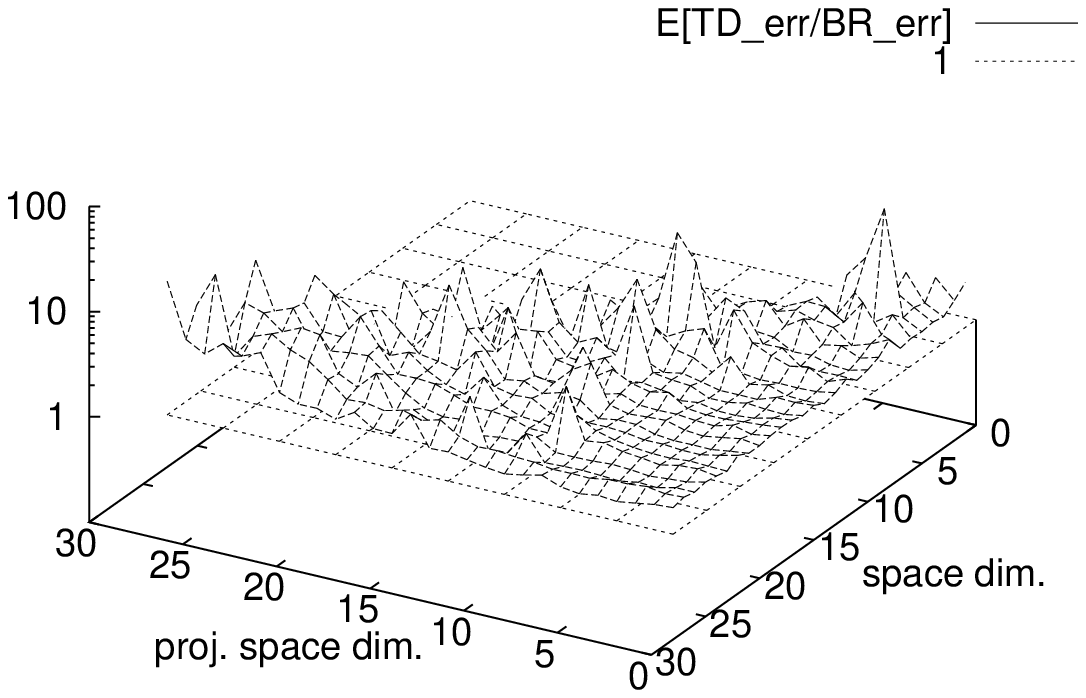}
\end{center}
\end{minipage}
\begin{minipage}[l]{.49\linewidth}
\begin{center}
{\footnotesize$\gamma=0.999$}
\vspace{-5mm}
\hspace{-5mm}
\includegraphics[width=4.5cm]{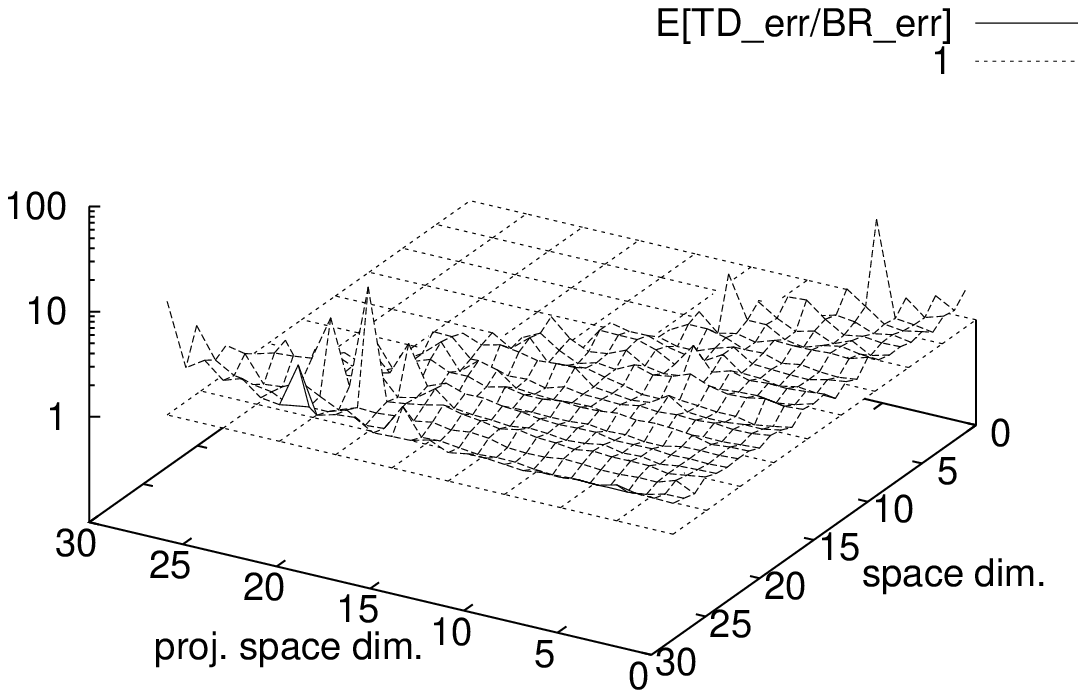}
\end{center}
\end{minipage}
\caption{\label{tdbr}Expectation of $e_{TD}/e_{BR}$.}
\end{figure}

\section{An Empirical Comparison}

\label{experiments}

In order to further compare the TD and the BR projections, we have made some empirical comparison, which we describe now. We consider spaces of
dimensions $n=2,3,..,30$. For each $n$, we consider projections of
dimensions $k=1,2,..,n$. For each $(n,k)$ couple, we generate 20
random projections (through random matrices\footnote{Each entry is a
  random uniform number between -1 and 1.} $\Phi$ of size $(n,k)$ and
random weight vectors $\xi$) and 20 random (uncontrolled) chain like
MDP: from each state $i$, there is a probability $p_i$ (chosen
randomly uniformly on $(0,1)$) to get to state $i+1$ and a probability
$1-p_i$ to stay in $i$ (the last state is absorbing); the reward is a
random vector. For the $20 \times 20$ resulting combinations, we
compute the real value $v$, its exact projection $\vopt$, the TD
fix point $\hat{v}_{TD}$, and the BR projection $\hat{v}_{BR}$. We
then deduce the best error $e=\|v-\vopt\|_\xi$, the TD error
$e_{TD}=\|v-\hat v_{TD}\|_\xi$ and the BR  $e_{BR}=\|v-\hat
v_{BR}\|_\xi$. We also compute the bounds of Proposition \ref{propbound} for both
methods: $b_{TD}$ and $b_{BR}$.  Each such experiment is done for 4 different values of the
discount factor $\gamma$: $0.9$, $0.95$, $0.99$, $0.999$.  

Using this
raw data on $20 \times 20$ problems, we compute for each $(n,k)$ couple some statistics, which we describe now. All the graphs that we display shows the dimension of the space $N$ and of the projected space $m$ on the $x-y$ axes. The $z$ axis correspond to the different statistics of interest.

\begin{figure}[h]
\begin{minipage}[l]{.49\linewidth}
\begin{center}
{\footnotesize$\gamma=0.9$}
\vspace{-5mm}
\hspace{-5mm}
\includegraphics[width=4.5cm]{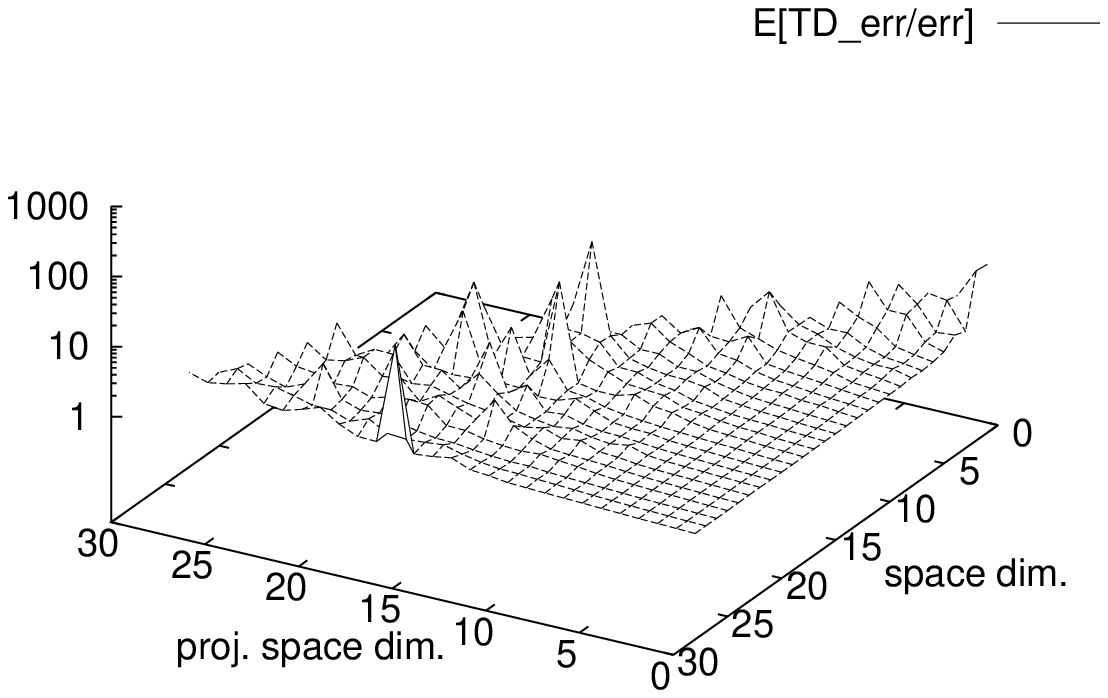}
\end{center}
\end{minipage}
\begin{minipage}[l]{.49\linewidth}
\begin{center}
{\footnotesize$\gamma=0.9$}
\vspace{-5mm}
\hspace{-5mm}
\includegraphics[width=4.5cm]{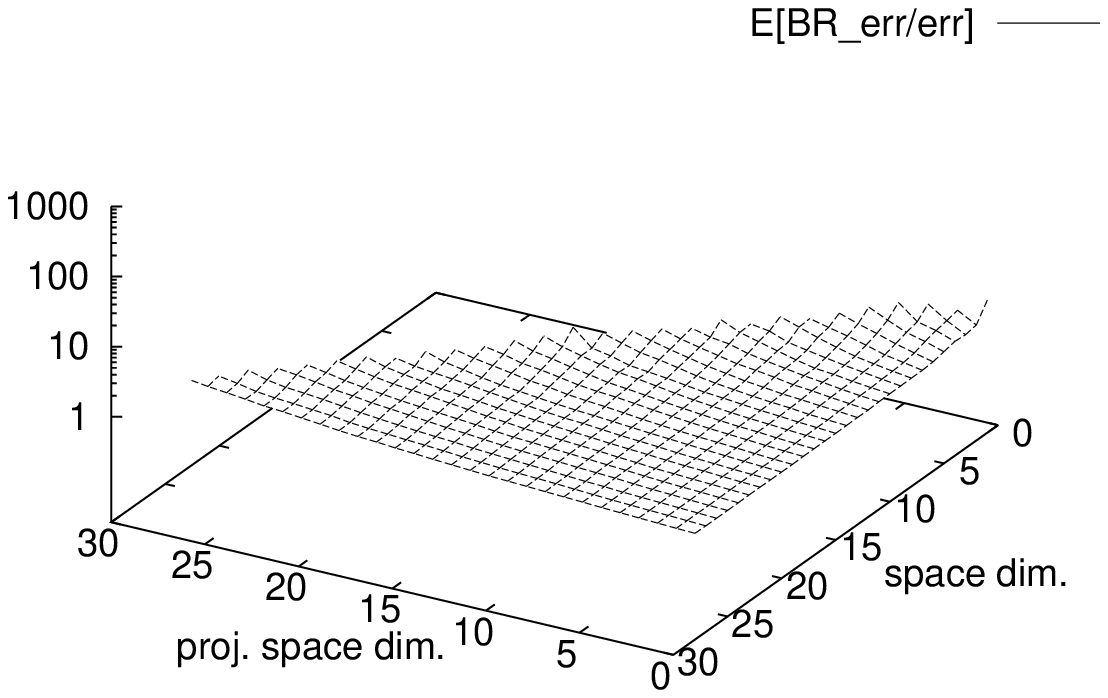}
\end{center}
\end{minipage}

\begin{minipage}[l]{.49\linewidth}
\begin{center}
{\footnotesize$\gamma=0.95$}
\vspace{-5mm}
\hspace{-5mm}
\includegraphics[width=4.5cm]{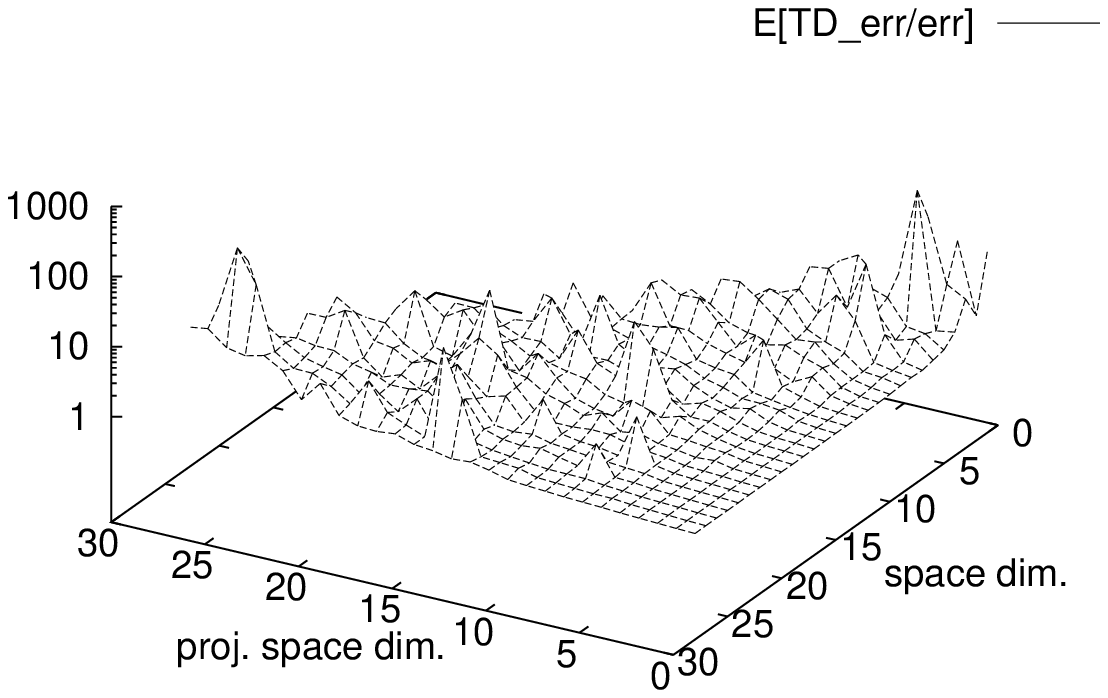}
\end{center}
\end{minipage}
\begin{minipage}[l]{.49\linewidth}
\begin{center}
{\footnotesize$\gamma=0.95$}
\vspace{-5mm}
\hspace{-5mm}
\includegraphics[width=4.5cm]{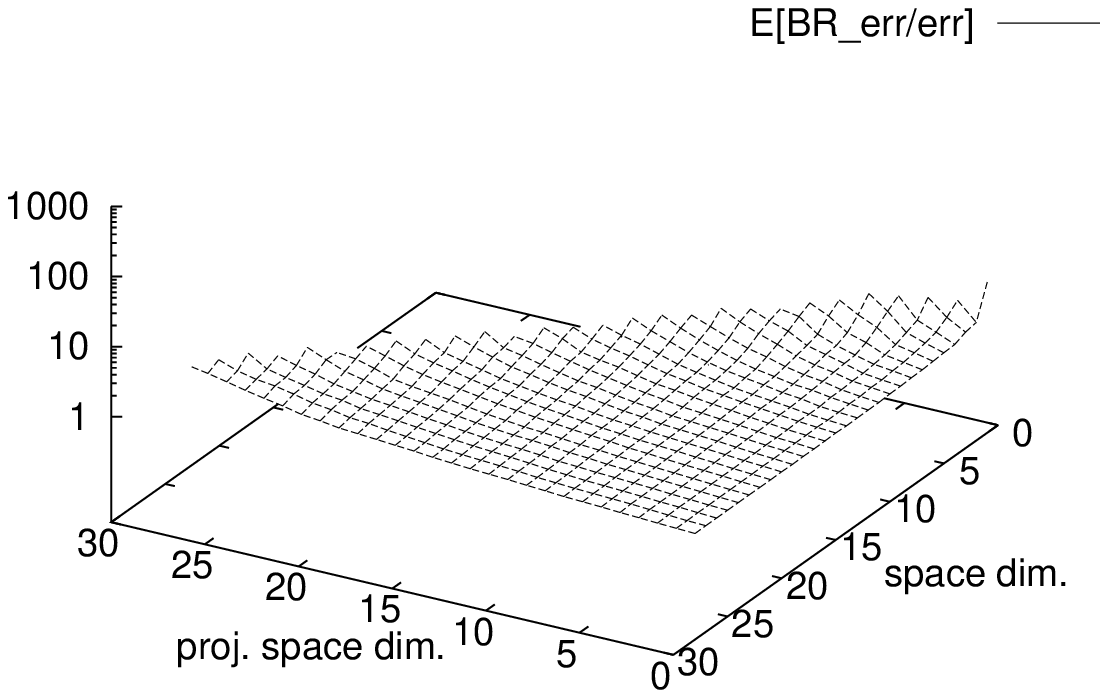}
\end{center}
\end{minipage}

\begin{minipage}[l]{.49\linewidth}
\begin{center}
{\footnotesize$\gamma=0.99$}
\vspace{-5mm}
\hspace{-5mm}
\includegraphics[width=4.5cm]{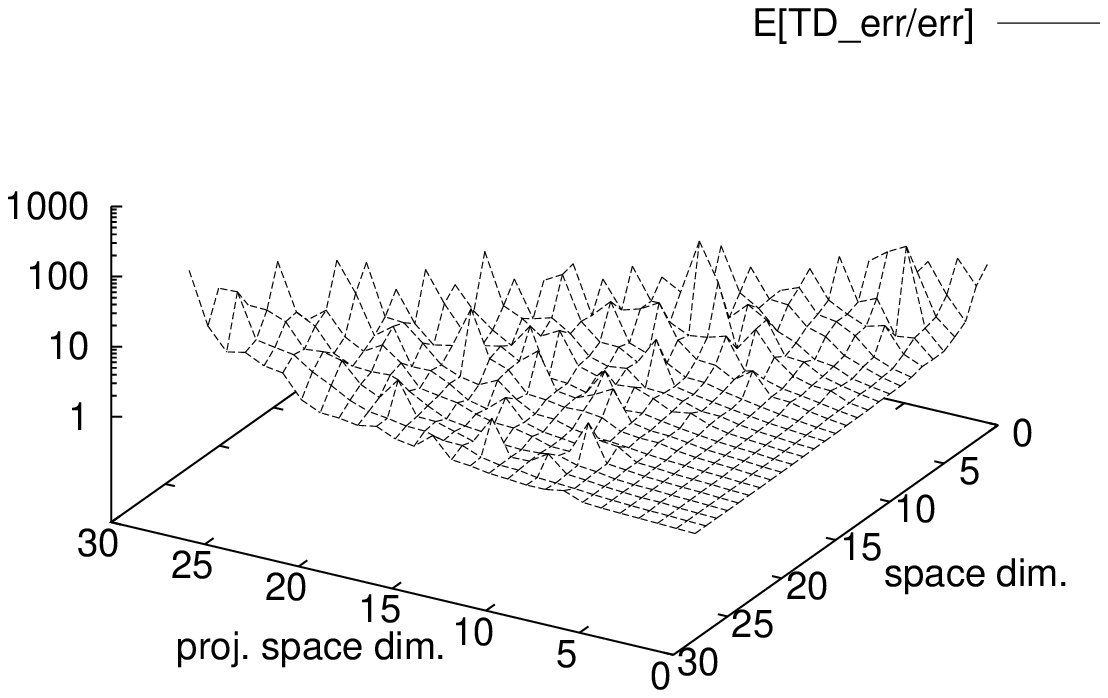}
\end{center}
\end{minipage}
\begin{minipage}[l]{.49\linewidth}
\begin{center}
{\footnotesize$\gamma=0.99$}
\vspace{-5mm}
\hspace{-5mm}
\includegraphics[width=4.5cm]{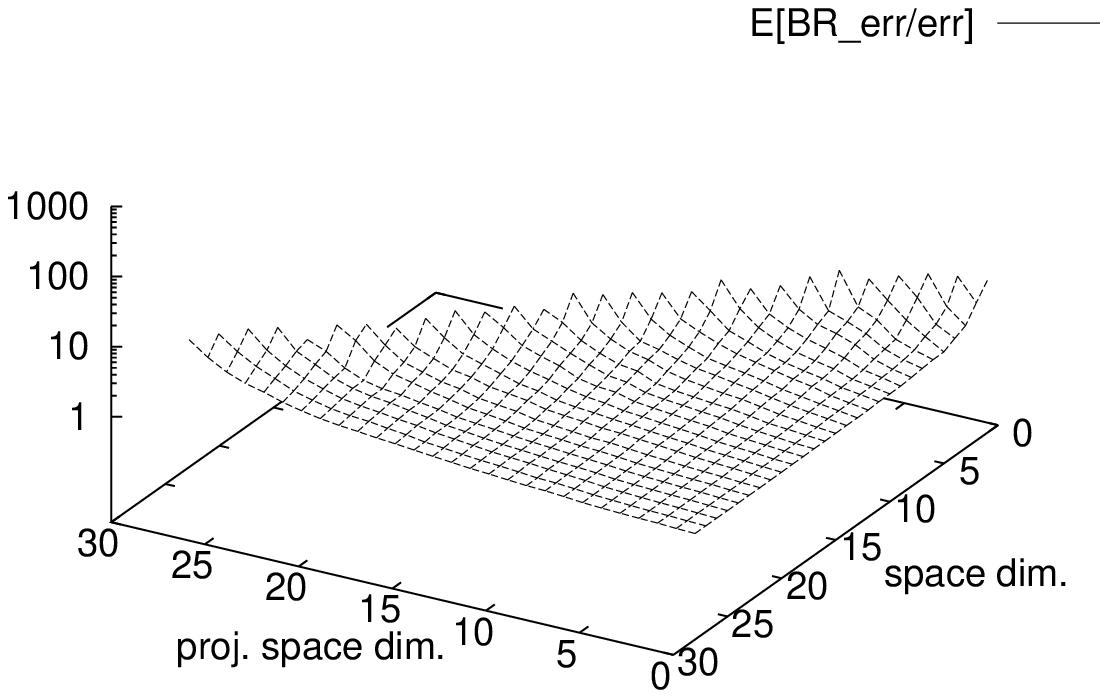}
\end{center}
\end{minipage}

\begin{minipage}[l]{.49\linewidth}
\begin{center}
{\footnotesize$\gamma=0.999$}
\vspace{-5mm}
\hspace{-5mm}
\includegraphics[width=4.5cm]{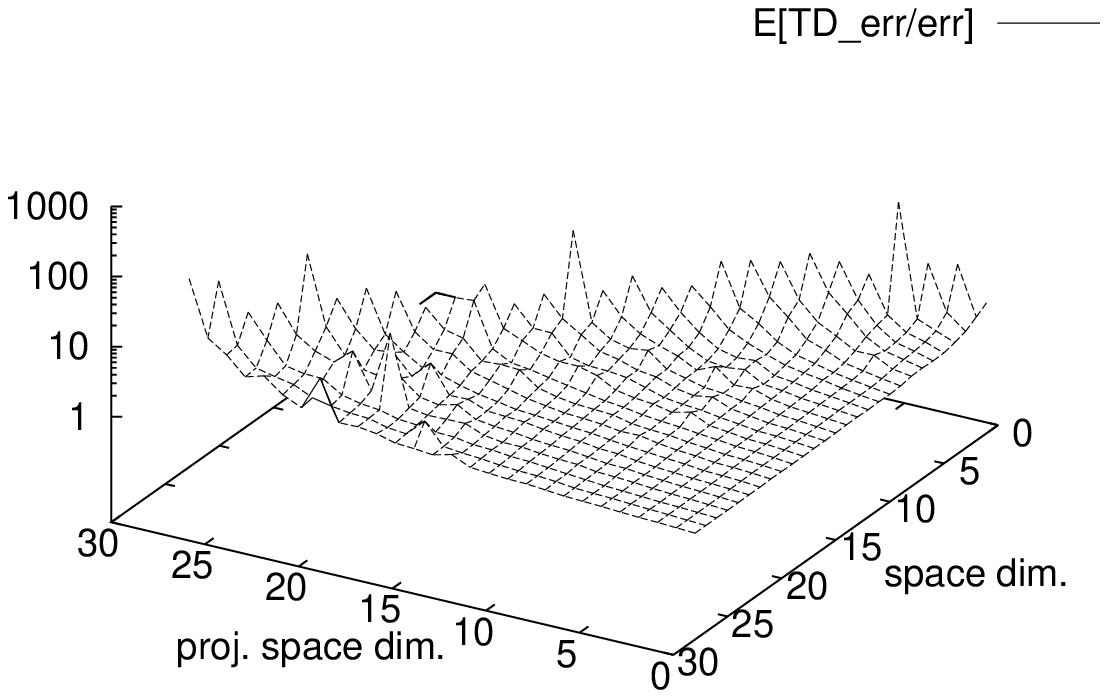}
\end{center}
\end{minipage}
\begin{minipage}[l]{.49\linewidth}
\begin{center}
{\footnotesize$\gamma=0.999$}
\vspace{-5mm}
\hspace{-5mm}
\includegraphics[width=4.5cm]{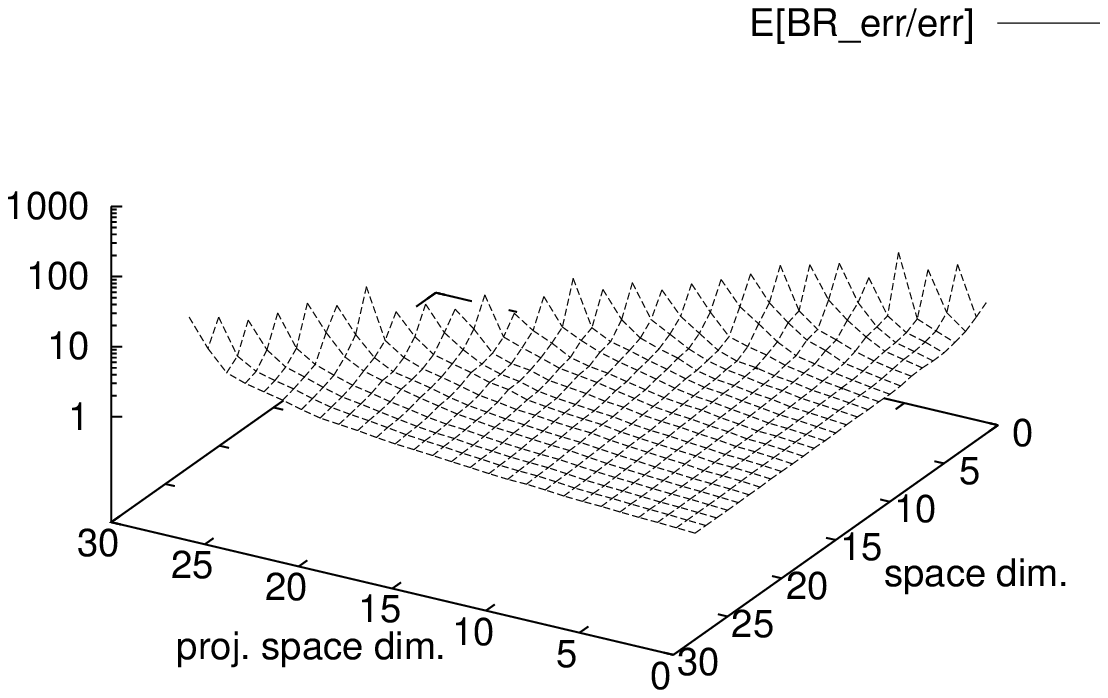}
\end{center}
\end{minipage}
\caption{\label{tderrbrerr}(Left) Expectation of  $e_{TD}/e$ and (Right) of $e_{BR}/e$.}
\end{figure}

Figure \ref{tdwin} shows the proportion of sampled problems where TD method returns a better approximation than BR (i.e. the expectation of the indicator function of  $e_{TD}<e_{BR}$). It turns out that this ratio is consistently greater than $\frac{1}{2}$, which means that the TD method is usually better than the BR method. 
%
Figure \ref{goodprediction} presents the ratio of time the bounds we have presented in Propostion 4 correctly guesses which method is the best (i.e. the expectation of the indicator function of $[e_{TD}<e_{BR}]=[b_{TD}<b_{BR}]$). Unless the feature space dimension is close to the state space dimension, the bounds do not appear very useful for such a decision. 
Figure \ref{tdbr} displays the expectation of $e_{TD}/e_{BR}$. One can observe that, on average, this expectation is bigger than $1$, that is the BR tends to be better, on average, than the TD error. This may look contradictory with our interpretation of Figure \ref{tdwin}, but the explanation is the following: when the BR method is better than the TD method, it is by a larger gap than when it is the other way round. We believe this corresponds to the situation when the TD method in unstable.
Figure \ref{tderrbrerr} allows to confirm this point: it shows the expectation of the relative approximation errors with respect to the best possible error, that is the expectation of  $e_{TD}/e$ and  $e_{BR}/e$. One observes on all charts that this average relative quality of the TD fix point has lots of pikes (corresponding to numerical instabilities), while that of the BR method is smooth.

\section{Conclusion and Future Work}

We have presented the TD fix point and the BR minimization methods for
approximating the value of some MDP fixed policy. We have described
two original examples: in the former, the BR method is consistently
better than the TD method, while the latter (which generalizes the
spirit of the example of \citet{sutton09}) is best treated by TD.
Proposition \ref{reltdbr} highlights the close relation between the
objective criteria that correspond to both methods. It shows that
minimizing the BR implies minimizing the TD error and some extra
``adequacy'' term, which happens to be crucial for numerical
stability.
 
Our main contribution, stated in Proposition \ref{mainprop}, provides
a new viewpoint for comparing the two projection methods, and
potential ideas for alternatives. Both TD and BR can be characterized
as solving a projected fixed point equation and this is to our
knowledge new for BR. Also, the solutions to both methods are some
oblique projection of the value $v$
and this is to our knowledge new for TD and BR. Eventually, this simple geometric
characterization allows to derive some tight error bounds
(Proposition \ref{propbound}).  We have discussed the close relations
of our results with those of \citet{schoknecht2002} and \citet{yu}, and argued that our work simplifies and extends them.
Though apparently new to the Reinforcement Learning community, the very idea
of oblique projections of fixed point equations has been studied in the Numerical Analysis community (see e.g. \citet{saad}). In the future, we plan to study more carefully this literature, and particularly investigate whether it may further contribute to the MDP context.

Concerning the practical question of choosing among the two methods TD
and BR, the situation can be summarized as follows: the BR method is
sounder than the TD method, since the former has a performance
guarantee while the latter will never have one in general. Extensive
simulations (on random chain-like problems of size up to $30$ states,
and for many projection of all the possible space sizes) further
suggest the following facts: (a) the TD solution is more often better
than the BR solution; (b) however sometimes, TD failed dramatically;
(c) overall, this makes BR better on average. Equivalently, one may
say that TD is more risky than BR.

Even if TD  is more risky, there remains several reasons why
one may want to use it in practice, and which our study did not focus
on. In large scale problems, one usually estimates the $m\times m$ linear
systems through sampling. Sampling based methods for BR are more
constraining since they generally require double sampling.
Independently, the fact, highlighted by Propostion
\ref{reltdbr}, that the BR is an upper bound of the TD error, suggests
two things. First, we believe that the variance of the BR problem is higher than that
of the TD problem; thus, given a fixed amount of samples, the TD
solution might be less affected by the corresponding stochastic noise
than the BR one. More generally, the BR problem may be harder to solve
than the TD problem, and from a numerical viewpoint, the latter may provide better solutions. Eventually, we only discussed the TD(0) fix point
method, that is the specific variant of TD($\lambda$) \cite{bertsekas,boyan02} where
$\lambda=0$. Values of $\lambda>0$ solve some of the weaknesses of
TD(0): it can be show that the stability issues disappear for values
of $\lambda$ close to $1$, and the optimal projection $\vopt$ is obtained when
$\lambda=1$. Further analytical and empirical comparisons of
TD($\lambda$) with the algorithms we have considered here (and with
some ``BR($\lambda$)'' algorithm) constitute future research.

Eventually, a somewhat disappointing observation of our study is that
the bounds of Proposition \ref{propbound}, which are
the tightest possible bounds independent of the reward function, did
not prove useful for deciding \emph{a priori} which of the two methods
one should trust better (recall the results showed in Figure
\ref{goodprediction}). Extending them in a way that would take
the reward into account, as well as trying to exploit our original
unified vision of the bounds (Propositions \ref{mainprop} and
\ref{propbound}) are some potential tracks for improvement.

\section*{Acknowlegments}

The author would like to thank Janey Yu for helpful discussions, and the anonymous reviewers for providing comments that helped to improve the presentation of the paper.

\bibliography{biblio}
\bibliographystyle{icml2010}

\end{document}